\documentclass{article}

\PassOptionsToPackage{numbers, sort&compress}{natbib}
 \usepackage[preprint]{neurips_2026}

\usepackage[utf8]{inputenc} 
\usepackage[T1]{fontenc}    
\usepackage{hyperref}       
\usepackage{url}            
\usepackage{booktabs}       
\usepackage{amsfonts}       
\usepackage{nicefrac}       
\usepackage{microtype}      
\usepackage{xcolor}         

\usepackage{amsmath}
\usepackage{bm}
\usepackage{algorithm}
\usepackage{algpseudocode} 
\usepackage{graphicx}
\usepackage{subcaption}
\usepackage{amsthm}
\usepackage{multirow}
\usepackage{booktabs}
\usepackage{placeins}

\newcommand{\ms}[2]{\ensuremath{#1_{\scriptscriptstyle \pm #2}}}
\newcommand{\bestms}[2]{\ensuremath{\mathbf{#1}_{\scriptscriptstyle \pm #2}}}

\newtheorem{theorem}{Theorem}[section]

\input{boldletters.tex}
\title{Tempered Guided Diffusion}

\author{%
  Andreas Makris
    \\
  Department of Mathematics and Statistics \\
  Lancaster University, UK \\
  \texttt{a.makris@lancaster.ac.uk} \\
  \And
  Paul Fearnhead \\
  Department of Mathematics and Statistics \\
  Lancaster University, UK \\
  \texttt{p.fearnhead@lancaster.ac.uk} \\
  \And
  Christopher Nemeth \\
  Department of Mathematics and Statistics \\
  Lancaster University, UK \\
  \texttt{c.nemeth@lancaster.ac.uk} \\
}

\begin{document}

\maketitle

\begin{abstract}
Training-free conditional diffusion provides a flexible alternative to task-specific conditional model training, but existing samplers often allocate computation inefficiently: independent guided trajectories can vary widely in quality, and additional function evaluations along a single trajectory may not recover from poor early decisions. We propose \textbf{Tempered Guided Diffusion (TGD)}, an annealed sequential Monte Carlo framework for training-free conditional sampling with diffusion priors. TGD targets tempered posterior distributions over the clean signal, using noisy diffusion states only as auxiliary variables for proposing reconstructions and propagating particles. Particles are reweighted by incremental likelihood ratios, resampled, and propagated across noise levels, concentrating computation on trajectories plausible under both the prior and observation. Under idealized exact-reconstruction assumptions, full TGD yields a consistent particle approximation to the posterior as the number of particles grows. For expensive reconstruction tasks, \textbf{Accelerated TGD (A-TGD)} retains early particle exploration but prunes to a single high-likelihood trajectory partway through sampling. Experiments on a controlled two-dimensional inverse problem and image inverse problems show improved posterior approximation and favorable wall-clock speed-quality tradeoffs over independent multi-trajectory baselines.

\end{abstract}

\section{Introduction}
\label{sec:introduction}

\begin{figure}[t]
    \centering
    \includegraphics[width=0.82\linewidth]{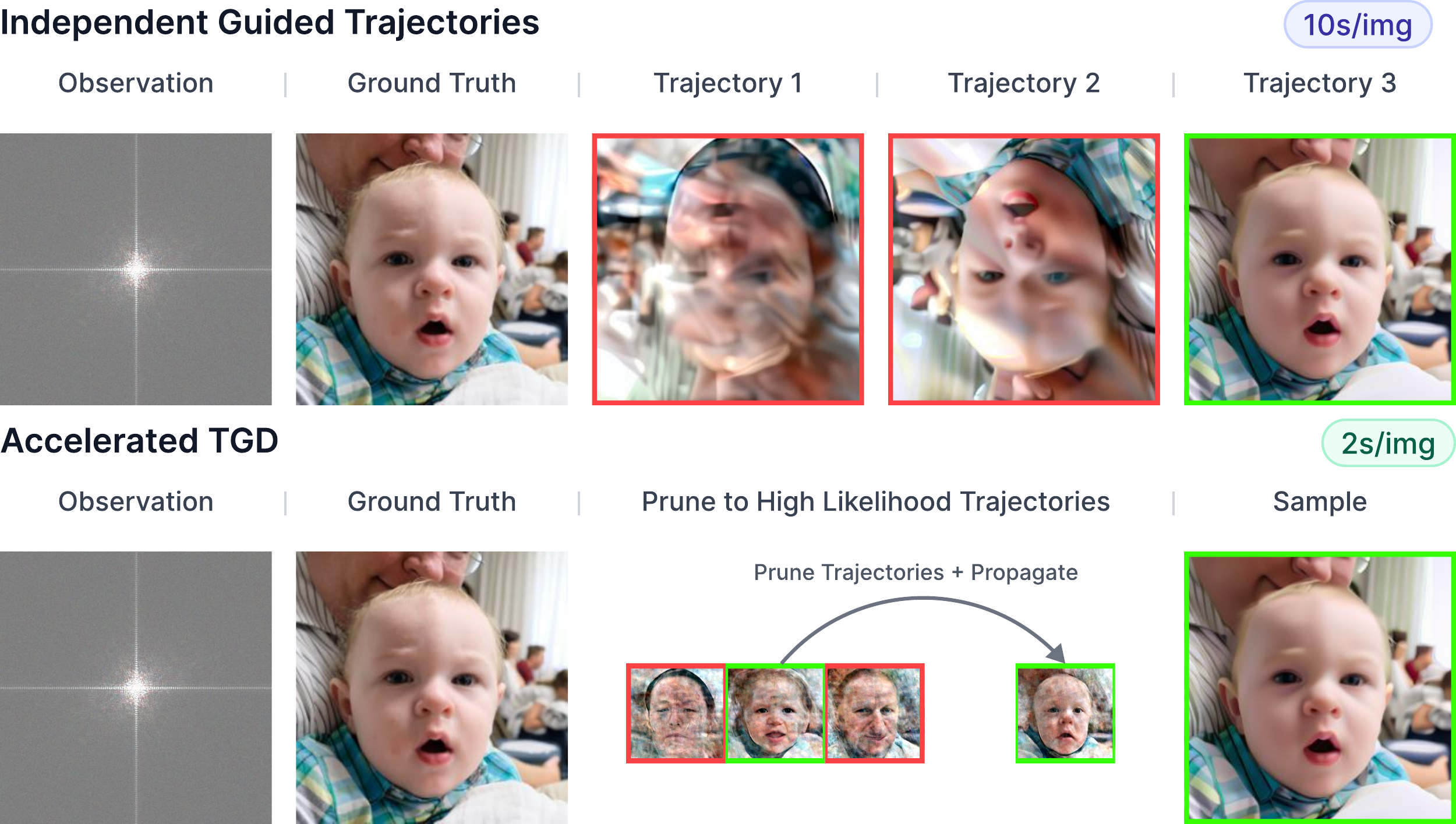}
    \caption{\textbf{Motivation for A-TGD.}
    Independent trajectories select only after completion; A-TGD explores early, prunes by likelihood, and completes one trajectory.}
    \label{fig:motivation}
\end{figure}

Diffusion models \cite{sohldickstein2015deep, song2019generative, song2020improved, ho2020denoisingdiffusionprobabilisticmodels, song2021scorebasedgenerativemodelingstochastic, song2021denoising, nichol2021improved, karras2022elucidatingdesignspacediffusionbased} have emerged as powerful generative priors for conditional generation \cite{dhariwal2021diffusionmodelsbeatgans, ho2022classifierfreediffusionguidance, bansal2024universal, autogduidance, pandey2025variational, gao2025rectified} and inverse problems \cite{song2022solving, kawar2021snips, kawar2022denoising, chung2022improving, chung2024diffusionposteriorsamplinggeneral, wang2023zero, zhu2023denoising, mardani2023variational, murata2023gibbsddrm, pgdm, fps_smc, Zhang_2025, ahmed2025solvingdiffusioninverseproblems}. In many applications, however, the conditioning mechanism changes from task to task, while paired training data for each new setting may be unavailable or costly to obtain. This makes training-free conditional diffusion \cite{he2023manifoldpreservingguideddiffusion, wu2024practicalasymptoticallyexactconditional, tfg, trippe2023diffusion, free_hunch, bansal2024universal} especially attractive: rather than training a separate conditional model for every inverse problem, one reuses a pretrained unconditional diffusion prior and incorporates the observation only at sampling time. This flexibility is useful both when the goal is posterior sampling or to obtain a single high-quality reconstruction.

Despite this appeal, existing training-free conditional diffusion methods can be unreliable on challenging inverse problems. For a fixed observation, different random initializations may produce markedly different reconstructions, some of which remain inconsistent with the observation or have poor perceptual quality \cite{Zhang_2025}. A common response is to run the sampler multiple times and retain the best output according to a data-consistency score \cite{chung2024diffusionposteriorsamplinggeneral, Zhang_2025}. Such best-of-$N$ heuristics can improve robustness, but they treat trajectories independently and spend computation uniformly across candidates, even after some have become unlikely to satisfy the observation. This suggests that the main difficulty is not only the amount of computation, but how it is allocated across candidate trajectories. This perspective is consistent with a broader theme in scalable Monte Carlo methodology \citep{fearnhead2025scalable}: when a single large computation is inefficient, one can often gain robustness by splitting computation across multiple partial simulations and then using weighting, selection, transformation, or recombination to concentrate effort on the statistically relevant components \citep{scott2022bayes,vyner2023swiss}. In TGD, this principle is instantiated through interacting particles rather than through independent best-of-$N$ trajectories.

Motivated by this perspective, we introduce \emph{Tempered Guided Diffusion} (TGD), a sequential Monte Carlo (SMC) approach that maintains a population of candidate reconstructions and uses resampling to focus subsequent computation on the most promising candidates. Existing diffusion-SMC methods have applied particle-based ideas to conditional sampling and inverse problems~\citep{trippe2023diffusion,fps_smc,wu2024practicalasymptoticallyexactconditional}. TGD differs in where the annealed targets are defined: rather than treating noisy diffusion states as the main objects of inference, TGD defines a sequence of tempered posterior targets \citep{delmoral2006sequential, smc_elements, neal1996sampling, neal1998annealedimportancesampling} over the clean signal. For an inverse problem with clean signal $\bx_0 \in \mathbb{R}^{d_x}$, observation $\by \in \mathbb{R}^{d_y}$, and likelihood $p(\by \mid \bx_0)$, these targets are
\begin{equation}
\label{eq:tempered-posterior}
\pi_r(\bx_0) \propto p(\bx_0)\,p(\by \mid \bx_0)^{\lambda_r},
\qquad
0\leq\lambda_R \le \lambda_{R-1} \le \cdots \le \lambda_0 = 1,
\end{equation}
where $R$ is the number of restarts, $\lambda_r \in [0,1]$ controls the strength of the observation, the index $r$ decreases as conditioning increases, and $\pi_0 = p(\bx_0 \mid \by)$ is the full posterior. Noisy diffusion states are used only as auxiliary variables for reconstruction and propagation. At each stage, a training-free diffusion solver, such as an MPGD- or DAPS-style reconstruction module~\citep{he2023manifoldpreservingguideddiffusion,Zhang_2025}, maps each noisy auxiliary particle to a candidate clean reconstruction. The candidates are then reweighted by the incremental likelihood ratio needed to move from $\pi_r$ to $\pi_{r-1}$; resampling discards low-likelihood candidates and duplicates promising ones before the retained particles are re-noised for the next stage. Thus, unlike independent best-of-$N$ sampling, TGD uses multiple reconstructions as an interacting population, concentrating later computation on candidates that remain plausible under both the prior and the observation.

For expensive image inverse problems, we use \emph{Accelerated TGD} (A-TGD), which prunes the particle population once at an intermediate stage (see Figure~\ref{fig:motivation}) and completes the same schedule with a single high-likelihood trajectory.

\paragraph{Our contributions:}
(i) We introduce TGD, an annealed SMC framework for training-free conditional sampling over clean signals, and show that several existing training-free samplers can be viewed as different stagewise reconstruction modules within the same outer particle scheme.
(ii) Under exact stagewise reconstruction and standard SMC assumptions, we show that idealized TGD is asymptotically consistent for the posterior as the number of particles grows.
(iii) We develop Accelerated TGD (A-TGD), a pruning-based variant that preserves early particle exploration while reducing the cost of reconstruction.
(iv) We evaluate TGD on a controlled two-dimensional inverse problem with known posterior and on image inverse problems including inpainting and phase retrieval, showing that particle resampling improves posterior approximation in the controlled setting and that A-TGD improves wall-clock speed-quality tradeoffs for image reconstruction.

\section{Preliminaries}
\label{sec:preliminaries}

\paragraph{Inverse problems.}
We consider inverse problems in which an unknown clean signal $\bx_0 \in \mathbb{R}^{d_x}$ is observed indirectly through a measurement $\by \in \mathbb{R}^{d_y}$. The likelihood $p(\by \mid \bx_0)$ encodes the probabilistic relationship between the clean signal and the observation, including the measurement process, or forward operator, and observation noise. Given a prior $p(\bx_0)$ over clean signals, the posterior \citep{gelman1995bayesian,murphy2022probabilistic,fearnhead2025scalable} is
\begin{equation}
\label{eq:posterior}
    p(\bx_0 \mid \by) \propto p(\bx_0)\,p(\by \mid \bx_0).
\end{equation}
We represent the prior $p(\bx_0)$ using a pretrained diffusion model. We adopt a standard continuous-time formulation based on a forward noising process and its corresponding reverse dynamics.

\paragraph{Unconditional diffusion.}
Let $(\bx_s)_{s \in [0,S]}$ denote a forward diffusion process defined by
\[
\mathrm{d}\bx_s = g(s)\, \mathrm{d}\bw_s,
\qquad \bx_0 \sim p(\bx_0),
\]
where \(\bw_s\) is standard Brownian motion and \(g(s)\) is a scalar noise schedule. This defines a family of marginal densities \(p_s(\bx_s)\), where \(p_s\) is the distribution obtained by sampling \(\bx_0 \sim p(\bx_0)\) and evolving the forward process to time \(s\). For sufficiently large \(S\), \(p_S\) is well-approximated by a simple Gaussian distribution \cite{karras2022elucidatingdesignspacediffusionbased, song2021scorebasedgenerativemodelingstochastic}.

Under standard regularity conditions~\citep{ANDERSON1982313,song2021scorebasedgenerativemodelingstochastic}, the reverse-time dynamics that transform $\bx_S \sim p_S$ back to $\bx_0 \sim p(\bx_0)$ are given by the reverse SDE
\begin{equation}
\mathrm{d}\bx_s =
\left[-g(s)^2 \nabla_{\bx_s}\log p_s(\bx_s)\right] \mathrm{d}s
+ g(s)\mathrm{d}\bar{\bw}_s,
\label{eq:reverse-sde}
\end{equation}
where $\bar{\bw}_s$ denotes reverse-time Brownian motion. One can also use the corresponding probability-flow ODE \citep{song2021scorebasedgenerativemodelingstochastic},
\begin{equation}
\mathrm{d}\bx_s =
-\frac{1}{2}g(s)^2 \nabla_{\bx_s}\log p_s(\bx_s)\,\mathrm{d}s,
\label{eq:probability-flow-ode}
\end{equation}
which has the same time-marginal distributions as the reverse SDE when integrated from $S$ to $0$. In practice, the score function $\nabla_{\bx_s} \log p_s(\bx_s)$ is approximated by a neural network trained on noisy samples \cite{karras2022elucidatingdesignspacediffusionbased,song2021scorebasedgenerativemodelingstochastic}. In the \emph{Elucidated Diffusion Model} (EDM) parameterization \cite{karras2022elucidatingdesignspacediffusionbased}, this corresponds to predicting a denoised estimate $\hat{\bx}_0(\bx_s,s)$, yielding
\[
\nabla_{\bx_s} \log p_s(\bx_s)
=
\frac{\hat{\bx}_0(\bx_s,s) - \bx_s}{\sigma_s^2},
\]
where $\sigma_s^2 = \int_0^s g(t)^2\,\mathrm{d}t$ is the accumulated noise variance of the forward process.

\paragraph{Training-free conditional sampling.}
Given an observation $\by$, we wish to sample from the posterior \eqref{eq:posterior} without retraining the diffusion model. Training-free conditional diffusion uses the property that, if we adapt the reverse dynamics by replacing the unconditional score in either the reverse SDE~\eqref{eq:reverse-sde} or the probability-flow ODE~\eqref{eq:probability-flow-ode} with the conditional score, then simulating the resulting conditional dynamics to time $0$ would yield samples from the posterior \eqref{eq:posterior}. The conditional score is defined using the conditional-score decomposition
\begin{equation}
\label{eq:conditional-score}
\nabla_{\bx_s} \log p_s(\bx_s \mid \by)
=
\nabla_{\bx_s} \log p_s(\bx_s)
+
\nabla_{\bx_s} \log p_s(\by \mid \bx_s),
\end{equation}
where
\[
p_s(\by \mid \bx_s)
=
\int p(\by \mid \bx_0)\, p(\bx_0 \mid \bx_s)\, \mathrm{d}\bx_0
\]
is the observation likelihood marginalized through the diffusion conditional $p(\bx_0 \mid \bx_s)$.

Incorporating \eqref{eq:conditional-score} into the reverse SDE yields condition-aware dynamics that map a noisy state $\bx_s$ toward samples consistent with $\by$. In practice, however, the term $\nabla_{\bx_s} \log p_s(\by \mid \bx_s)$ is intractable and must be approximated. Existing methods differ in this approximation: manifold-preserving guided diffusion (MPGD) \cite{he2023manifoldpreservingguideddiffusion} and related approaches \cite{chung2024diffusionposteriorsamplinggeneral, Zhang_2025, free_hunch} construct approximate gradients in data or measurement space, while others rely on plug-in likelihood approximations or score corrections.

Another common strategy is to introduce \emph{restarts}, repeatedly reinitializing the diffusion process at intermediate noise levels and reapplying conditional updates~\cite{Zhang_2025, ahmed2025solvingdiffusioninverseproblems, yu2023freedom, repaint, bansal2024universal, tfg}. DAPS, for example, alternates noising and denoising steps and can use a coarse ODE discretization to obtain efficient reconstructions between restart levels~\citep{Zhang_2025}. Restarts can improve robustness, but they are typically used as an ad hoc sampling heuristic. In TGD, these short conditional reconstruction procedures are used as modules that map noisy auxiliary states to candidate clean signals; the outer procedure is instead formulated separately as SMC over an annealed sequence of clean-space posteriors.

\section{Tempered Guided Diffusion}
\label{sec:tempered-guided-diffusion}

\subsection{Annealed posterior paths over clean signals}
\label{sec:annealed-clean-targets}

Our goal is to sample from the posterior $p(\bx_0 \mid \by)$. Rather than imposing the full observation likelihood at once, TGD introduces an annealed sequence~\citep{delmoral2006sequential,smc_elements,cabezas2024markovian} of tempered targets \citep{neal1996sampling,nemeth2019pseudo} over the clean signal $\bx_0$,
\begin{equation}
\label{eq:tgd-path}
\pi_r(\bx_0)
\propto
p(\bx_0)\,p(\by \mid \bx_0)^{\lambda_r},
\qquad
0\leq\lambda_R \le \lambda_{R-1} \le \cdots \le \lambda_0 = 1,
\end{equation}
where $R$ is the number of restarts, $\lambda_r \in [0,1]$ is the tempering parameter at stage $r$. The index $r$ decreases as conditioning increases: $\pi_0 = p(\bx_0 \mid \by)$ is the full posterior, while earlier stages impose the observation more weakly. When $\lambda_R=0$, the initial target is the prior. Practical choices of the tempering schedule
are described in Appendix~\ref{app:ann_schedules}.

The key modeling choice in TGD is that the annealed path \eqref{eq:tgd-path} is defined over clean signals rather than noisy diffusion states. We refer to each $\pi_r$ as a clean-space target because it is a distribution over the unknown clean signal $\bx_0$. An outer stage pairs a tempering level $\lambda_r$ with a diffusion noise level $s_r$. At stage $r$, the particle system is represented by noisy auxiliary states $\{\bz_r^{(i)}\}_{i=1}^N$ at noise level $s_r$. These states are not themselves the inferential targets; they are diffusion-compatible variables used to propose and propagate clean candidates.

Given a noisy auxiliary state $\bz_r^{(i)}$, a reconstruction step produces a candidate clean signal $\bx_0^{(i)}$. These clean candidates are reweighted by the incremental likelihood ratio needed to move from $\pi_r$ to $\pi_{r-1}$ and may then be resampled. The retained clean particles are subsequently re-noised to form auxiliary states at the next noise level. Thus, the sequence $\{\pi_r\}_{r=0}^R$ defines the clean-space inference path, while diffusion supplies the reconstruction and re-noising operations
used to move the particle population along that path.

We write $p_s(\bz \mid \bx_0)$ for the forward noising kernel and $p_s(\bx_0 \mid \bz)\propto p(\bx_0)p_s(\bz \mid \bx_0)$ for the corresponding conditional density.

\subsection{Stagewise construction of TGD}
\label{sec:stagewise}

We approximate the targets $\{\pi_r\}_{r=0}^R$ with a weighted particle system evolved by SMC \cite{doucet2001sequential, delmoral2004feynman, chopin2004introduction, smc_elements}. At outer stage $r$, particles are represented by auxiliary states $\{(\bz_r^{(i)}, w_r^{(i)})\}_{i=1}^N$ at noise level $s_r$, where
\[
S = s_R \ge s_{R-1} \ge \cdots \ge s_0 > 0.
\]

We initialize from the noisy prior,
\[
\bz_R^{(i)} \sim p_{s_R}(\bz_R),
\qquad
w_R^{(i)} = \frac{1}{N},
\qquad i=1,\dots,N.
\]
When $\lambda_R=0$, this is exactly the noisy marginal corresponding to the initial clean-space target. More generally, it should be viewed as a prior initialization before the first tempered reconstruction step.

For $r=R,\dots,1$, one TGD update from stage $r$ to stage $r-1$ consists of: (i) reconstruction, (ii) weighting/resampling, and (iii) propagation.

\paragraph{Reconstruction.}
Given $\bz_r^{(i)}$, we sample a clean candidate from the stagewise reconstruction law
\[
\bx_0^{(i)} \sim p_{\lambda_r,s_r}(\bx_0 \mid \bz_r^{(i)}, \by),
\]
where
\begin{equation}
\label{eq:reconstruction}
p_{\lambda_r,s_r}(\bx_0 \mid \bz_r, \by)
\propto
p_{s_r}(\bx_0 \mid \bz_r)\,p(\by \mid \bx_0)^{\lambda_r}
\propto
p(\bx_0)\,p_{s_r}(\bz_r \mid \bx_0)\,p(\by \mid \bx_0)^{\lambda_r}.
\end{equation}

This maps each noisy auxiliary particle to a clean reconstruction under the current tempering level.

\paragraph{Weighting and resampling.}
To move from $\pi_r$ to $\pi_{r-1}$, we update weights using the incremental likelihood ratio
\[
\tilde w_{r-1}^{(i)}
=
w_r^{(i)} \, p(\by \mid \bx_0^{(i)})^{\lambda_{r-1} - \lambda_r}.
\]
After normalization, particles whose clean reconstructions better satisfy the observation receive larger weight as conditioning is strengthened. This likelihood-ratio form follows from the lifted construction underlying TGD (see Appendix~\ref{app:smc-weight-derivations}). To control weight degeneracy, one may resample particles using a standard
low-variance scheme. In our experiments, when resampling is enabled, we use
systematic resampling \citep{smc_elements} and reset all weights to \(1/N\).

\paragraph{Propagation.}
Each retained clean particle is propagated to the next outer stage by re-noising,
\[
\bz_{r-1}^{(i)} \sim p_{s_{r-1}}(\bz_{r-1} \mid \bx_0^{(i)}),
\]
yielding a new population of noisy auxiliary states at level $s_{r-1}$.

After the outer loop, a final reconstruction at $r=0$ produces clean samples
\[
\bx_0^{(i)} \sim p_{\lambda_0,s_0}(\bx_0 \mid \bz_0^{(i)}, \by),
\]
and TGD returns the weighted particle set $\{(\bx_0^{(i)}, w_0^{(i)})\}_{i=1}^N$. 

\begin{algorithm}[t]
\caption{Tempered Guided Diffusion (TGD)}
\label{alg:tgd}
\begin{algorithmic}[1]
\Require observation $\by$, number of particles $N$, tempering schedule $\{\lambda_r\}_{r=0}^R$, noise schedule $\{s_r\}_{r=0}^R$, conditional reconstruction law $p_{\lambda_r,s_r}(\bx_0 \mid \bz_r,\by)$
\State Sample $\bz_R^{(i)} \sim p_{s_R}(\bz_R)$ and set $w_R^{(i)} \gets 1/N$, for $i=1,\dots,N$

\For{$r = R, R-1, \dots, 1$}
    \For{$i = 1,\dots,N$}
        \State Sample $\bx_0^{(i)} \sim p_{\lambda_r,s_r}(\bx_0 \mid \bz_r^{(i)}, \by)$
        \State $\tilde w_{r-1}^{(i)} \gets w_r^{(i)} \, p(\by \mid \bx_0^{(i)})^{\lambda_{r-1}-\lambda_r}$
    \EndFor
    \State $w_{r-1}^{(i)} \gets \tilde w_{r-1}^{(i)} \big/ \sum_{j=1}^N \tilde w_{r-1}^{(j)}$, for $i=1,\dots,N$
    \State Optionally resample according to $\{w_{r-1}^{(i)}\}_{i=1}^N$ and reset weights to $1/N$
    \For{$i = 1,\dots,N$}
        \State Sample $\bz_{r-1}^{(i)} \sim p_{s_{r-1}}(\bz_{r-1}\mid \bx_0^{(i)})$
    \EndFor
\EndFor

\For{$i = 1,\dots,N$}
    \State Sample $\bx_0^{(i)} \sim p_{\lambda_0,s_0}(\bx_0 \mid \bz_0^{(i)}, \by)$
\EndFor

\State \Return $\{(\bx_0^{(i)}, w_0^{(i)})\}_{i=1}^N$
\end{algorithmic}
\end{algorithm}

\subsection{Practical reconstruction modules}
\label{sec:reconstruction-modules}

The ideal reconstruction law \eqref{eq:reconstruction} is generally unavailable, so practical TGD replaces this draw with an approximate training-free conditional diffusion solver. The outer update is unchanged: reconstructed particles are likelihood-reweighted, optionally resampled, and re-noised.

We instantiate this interface with MPGD-style and DAPS-style modules. MPGD/DPS-style methods approximate the conditional score term in \eqref{eq:conditional-score}, while DAPS-style methods combine unconditional diffusion reconstruction with clean-space observation correction. In all image experiments, each reconstruction module uses a coarse four-step ODE discretization between restart levels, so cost scales with the number of particles, outer stages, and reconstruction steps per stage.

This modular view treats existing training-free solvers as approximations to the same stagewise reconstruction law; detailed reductions are deferred to Appendices~\ref{app:hybrid-conditional} and~\ref{app:recovering-existing-training-free-conditional-samplers}.

\subsection{Idealized consistency of TGD}
\label{sec:idealized-exactness-tgd}

We analyze an ideal reference version of TGD to isolate the particle approximation from reconstruction error. In this version, each reconstruction step samples exactly from
\[
p_{\lambda_r,s_r}(\bx_0 \mid \bz_r,\by)
\propto
p_{s_r}(\bx_0 \mid \bz_r)\,p(\by \mid \bx_0)^{\lambda_r},
\]
and the full particle population is propagated without pruning. Exact reconstuction maps the noisy auxiliary marginal associated with $\pi_r$ back to the clean target $\pi_r$; likelihood tempering then reweights particles from $\pi_r$ to $\pi_{r-1}$, and re-noising produces the next auxiliary marginal. Thus ideal TGD is a lifted SMC sampler \citep{delmoral2006sequential} over the clean-space targets.

\begin{theorem}[Consistency of ideal TGD]
Assume $\lambda_R=0$, so that initialization from the noisy prior matches the initial clean-space target. Suppose further that the stagewise reconstruction kernel is exact at each outer stage, the incremental potentials
\[
G_r(\bx_0)=p(\by \mid \bx_0)^{\lambda_{r-1}-\lambda_r}
\]
are bounded, the initial particles are sampled i.i.d. from the noisy prior, and resampling, when used, is performed with a standard SMC scheme \cite{smc_elements, chopin2004introduction}. Then, for every bounded measurable test function $h$,
\[
\sum_{i=1}^N w_0^{(i)} h(\bx_0^{(i)})
\;\xrightarrow[]{\mathbb{P}}\;
\mathbb{E}_{p(\bx_0 \mid \by)}[h(\bx_0)].
\]
Equivalently, $\hat{\pi}_0^N$ converges weakly in probability to the posterior $p(\bx_0 \mid \by)$.
\end{theorem}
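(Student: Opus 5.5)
The plan is to recast ideal TGD as a standard Feynman--Kac particle model on an extended (lifted) state space and then invoke the classical weak law of large numbers for SMC with bounded potentials \cite{smc_elements, chopin2004introduction}.

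\textbf{Step 1: invariance of the clean-space targets.} Define the Markov kernel on clean signals
\[
K_r(\bx_0,\mathrm{d}\bx_0') \;=\; \int p_{s_{r-1}}(\bz\mid \bx_0)\,p_{\lambda_{r-1},s_{r-1}}(\bx_0'\mid \bz,\by)\,\mathrm{d}\bz\,\mathrm{d}\bx_0',
\]
which is re-noising to level $s_{r-1}$ followed by exact reconstruction at tempering level $\lambda_{r-1}$. I will show that $\pi_{r-1}K_r=\pi_{r-1}$. Under $\pi_{r-1}$, the joint density of $(\bx_0,\bz)$ is proportional to
\[
p(\bx_0)\,p(\by\mid\bx_0)^{\lambda_{r-1}}\,p_{s_{r-1}}(\bz\mid\bx_0).
\]
By \eqref{eq:reconstruction}, the conditional of $\bx_0$ given $\bz$ under this joint is exactly $p_{\lambda_{r-1},s_{r-1}}(\bx_0\mid\bz,\by)$. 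Hence $K_r$ is a data-augmentation (two-block Gibbs) kernel, so it is $\pi_{r-1}$-reversible and in particular $\pi_{r-1}$-invariant.

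The initialization is handled the same way. Because $\lambda_R=0$, drawing $\bz_R\sim p_{s_R}$ and then reconstructing with $p_{0,s_R}(\cdot\mid\bz_R)=p_{s_R}(\cdot\mid\bz_R)$ yields i.i.d. draws from the prior $\pi_R$. So the first clean particles are exact i.i.d. samples from $\pi_R$.

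\textbf{Step 2: identify the Feynman--Kac model.} The clean particles produced at each stage follow the standard sequence: weight by $G_r$, resample, then mutate by $K_r$. Since $\pi_{r-1}\propto \pi_r G_r$, the normalized Feynman--Kac flow satisfies
\[
\eta_{r-1}=\Psi_{G_r}(\eta_r)K_r,\qquad \Psi_{G}(\mu)(\mathrm{d}x)=\frac{G(x)\mu(\mathrm{d}x)}{\mu(G)}.
\]
By induction and Step 1, if $\eta_r=\pi_r$ then $\Psi_{G_r}(\pi_r)=\pi_{r-1}$, and invariance gives $\eta_{r-1}=\pi_{r-1}K_r=\pi_{r-1}$. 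The final weighted output $\{(\bx_0^{(i)},w_0^{(i)})\}$ is exactly the particle approximation of this flow.

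The weights require some care. When resampling is skipped, the weights carry through the mutation step unchanged, which is valid because each $K_r$ leaves the corresponding target invariant. This is the usual SMC sampler with MCMC moves of \cite{delmoral2006sequential}, with the incremental weight simplifying to $G_r$.

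\textbf{Step 3: apply the SMC law of large numbers.} Proceed by induction over the $R$ stages, which are finite in number. The mutation step preserves convergence in probability for bounded $h$, by conditional Chebyshev given the previous generation. The reweighting step preserves it because $G_r$ is bounded and $\pi_r(G_r)>0$, so the ratio $\hat\pi_r^N(G_rh)/\hat\pi_r^N(G_r)$ converges by Slutsky. Standard resampling (multinomial, or systematic as in \cite{smc_elements}) is unbiased and adds error that vanishes in probability for bounded functions.

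\textbf{Main obstacle.} The delicate part is the positivity $\pi_r(G_r)>0$ together with the bookkeeping of which particles carry which weights between reconstruction and re-noising. Positivity I will obtain from the posterior being well defined, that is $\int p(\bx_0)p(\by\mid\bx_0)\,\mathrm{d}\bx_0>0$, which forces $p(\by\mid\bx_0)>0$ on a set of positive prior measure. For the bookkeeping, I will cite the lifted-weight derivation of Appendix~\ref{app:smc-weight-derivations} and state it in the Feynman--Kac form above.
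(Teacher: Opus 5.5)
Your proof is correct, but you organize it differently from the paper.

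\textbf{How the two routes differ.} The paper keeps the noisy auxiliary states as explicit intermediate populations. It tracks the flow $\eta_r \xrightarrow{K_r} \pi_r \xrightarrow{\Psi_{G_r}} \pi_{r-1} \xrightarrow{P_{r-1}} \eta_{r-1}$ across two spaces, and applies mutation-consistency twice per stage:
\begin{itemize}
\item the Bayes identity $\eta_r K_r=\pi_r$ for reconstruction;
\item the definition $\pi_{r-1}P_{r-1}=\eta_{r-1}$ for re-noising.
\end{itemize}
You instead fuse re-noising to $s_{r-1}$ and reconstruction at $(\lambda_{r-1},s_{r-1})$ into a single clean-space kernel. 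You recognize it as the marginal chain of a two-block data-augmentation Gibbs sampler for $\pi_{r-1}$. Its invariance, $\pi_{r-1}P_{r-1}K_{r-1}=\eta_{r-1}K_{r-1}=\pi_{r-1}$ in the paper's notation, is exactly the composition of the paper's two identities.

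\textbf{What each route buys.} Your packaging gives a single state space. Ideal TGD becomes a literal instance of the Del Moral--Doucet--Jasra SMC sampler with $\pi_{r-1}$-invariant MCMC moves, with weights computed on pre-move particles. The textbook Feynman--Kac law of large numbers then applies verbatim, once the first reconstruction is treated as exact i.i.d.\ sampling from $\pi_R$, as you do. The paper's version needs only the Bayes identity, not invariance of a composite kernel. It also keeps the noisy marginals visible, matching Algorithm~\ref{alg:tgd} and the lifted weight derivation of Appendix~\ref{app:smc-weight-derivations}.

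\textbf{A small improvement over the paper.} You derive $\pi_r(G_r)=Z_{r-1}/Z_r>0$ from well-posedness of the posterior, whereas the paper simply assumes it. Finiteness of every $Z_r$ also comes for free. Since $\lambda_R=0<1=\lambda_0$, some $G_r$ with a strictly positive exponent is bounded, so $p(\by\mid\bx_0)$ itself is bounded.

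\textbf{One caveat in your Step 3.} Unbiasedness of resampling does not by itself make the resampling error vanish.
\begin{itemize}
\item For multinomial, stratified or residual resampling, a conditional variance bound of order $\|h\|_\infty^2/N$ does give this.
\item For systematic resampling, consistency is known to depend on the ordering of the particles (Gerber, Chopin and Whiteley, 2019).
\end{itemize}
The paper's formal statement sidesteps this by taking consistency of the resampling scheme as an explicit hypothesis (condition~5 of Theorem~\ref{thm:appB-consistency}). You should do the same rather than assert it for systematic resampling.
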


The formal assumptions and proof are given in Appendix~\ref{app:consistency-tgd}. This result applies only to the ideal full-particle procedure; practical reconstruction modules and the pruning step in A-TGD introduce additional approximation.

\subsection{Accelerated TGD}
\label{sec:accelerated-tgd}

TGD is designed to approximate the posterior by propagating a particle population through all outer stages. For image inverse problems, where the goal is often a single reconstruction under a fixed compute budget, we use \emph{Accelerated TGD} (A-TGD): a pruning-based variant that runs the full particle procedure only during the early stages.

A-TGD starts with $N$ particles and follows TGD until a pruning fraction $\rho \in [0,1]$ of the outer stages has been completed. At this point, it reconstructs clean candidates $\{\bx_0^{(i)}\}_{i=1}^N$ using the current reconstruction module and retains the particle with largest likelihood,
\[
i^\star
=
\arg\max_{i \in \{1,\dots,N\}}
p(\by \mid \bx_0^{(i)}).
\]
All other particles are discarded, and the auxiliary state corresponding to the selected particle is continued through the remaining stages with a single trajectory. Thus A-TGD keeps the exploratory benefit of multiple early particles while avoiding the cost of propagating the full population to $r=0$.

The pruning fraction $\rho$ controls the trade-off between exploration and computation: smaller values prune earlier and are cheaper, while larger values more closely track full TGD. Because pruning collapses the particle population, A-TGD is a reconstruction-oriented acceleration and is not covered by the posterior-consistency result of Section~\ref{sec:idealized-exactness-tgd}. 

\section{Related Work}
\label{sec:related-work}

Training-free diffusion inverse solvers use a pretrained unconditional diffusion model and incorporate the observation only at sampling time. DPS \cite{chung2024diffusionposteriorsamplinggeneral}, MPGD \cite{he2023manifoldpreservingguideddiffusion}, DAPS \cite{Zhang_2025}, and related approaches \cite{free_hunch, ahmed2025solvingdiffusioninverseproblems} differ in how they impose data consistency, for example through approximate conditional guidance, manifold constraints, decoupled posterior updates, or repeated corrections at intermediate noise levels. RePaint \cite{repaint} and FreeDoM \cite{yu2023freedom} further show that revisiting intermediate noise levels can improve robustness. TGD is complementary to these approaches: it treats conditional diffusion solvers as stagewise reconstruction modules inside an outer annealed particle procedure, rather than prescribing a single guidance approximation. The additional particle layer lets intermediate reconstructions interact through likelihood weighting and resampling, reallocating later computation toward candidates that better satisfy the observation. We make these links explicit in Appendix~\ref{app:recovering-existing-training-free-conditional-samplers}, where several training-free samplers are recovered as special cases of the reconstruction interface.

TGD is also related to particle-based approaches for diffusion models and inverse problems, including SMCDiff~\citep{trippe2023diffusion}, TDS~\citep{wu2024practicalasymptoticallyexactconditional}, and FPS-SMC~\citep{fps_smc}. These methods demonstrate the usefulness of SMC ideas for diffusion-based sampling. More broadly, particle systems have also been used outside diffusion models for latent-variable inference and learning, including marginal maximum-likelihood training methods in which particles approximate intractable posterior or marginal quantities \citep{kuntz2023particle,sharrock2024tuning}. TGD differs in where the annealed targets and particle weights are defined. Its SMC targets are clean-space distributions over $\bx_0$, so each reconstructed candidate is weighted using the observation likelihood $p(\by \mid \bx_0)$ evaluated on a clean signal. Noisy diffusion states are instead auxiliary variables: after resampling, retained clean particles are re-noised to produce the next auxiliary states and then reconstructed again. Thus, diffusion supplies proposal and refresh moves, while likelihood weighting and resampling act directly on clean reconstructions. This also distinguishes TGD from independent best-of-$N$ sampling \cite{Zhang_2025}, where trajectories are run separately and selected only at the end; A-TGD uses this interaction early, then prunes for efficient reconstruction.

\section{Experiments}
\label{sec:experiments}

\paragraph{Setup.}

We evaluate TGD in a controlled two-dimensional inverse problem with a known posterior, and A-TGD on image inverse problems. The two-dimensional experiment uses a noisy elementwise absolute-value observation model and measures posterior approximation using sliced Wasserstein distance (SWD) \cite{bonneel2015sliced, flamary2021pot, flamary2024pot} to reference posterior samples. For images, we consider inpainting and phase retrieval on FFHQ \cite{karras2019style} and ImageNet \cite{deng2009imagenet}, reporting LPIPS \cite{zhang2018unreasonable}, PSNR, and SSIM averaged over 100 validation images. In all image-result tables, entries are reported as mean \(\pm\) standard deviation over the 100 validation images. We compare against DPS, DAPS with one trajectory, and DAPS with four independent trajectories, denoted DAPS $(N=4)$, where the final reconstruction is selected by lowest measurement error. A-TGD also starts with $N=4$ particles but prunes to a single trajectory partway through sampling. Image methods are compared at matched wall-clock runtime within each task and dataset. All image experiments use four Euler ODE steps per reconstruction module (except DPS, see Appendix \ref{app:details}) and a single NVIDIA H200 GPU with 141GB memory.

\subsection{Controlled posterior approximation in 2D}
\label{sec:2d-experiments}

\begin{figure*}[t]
    \centering

    \begin{subfigure}[t]{0.59\textwidth}
        \centering
        \includegraphics[height=0.17\textheight]{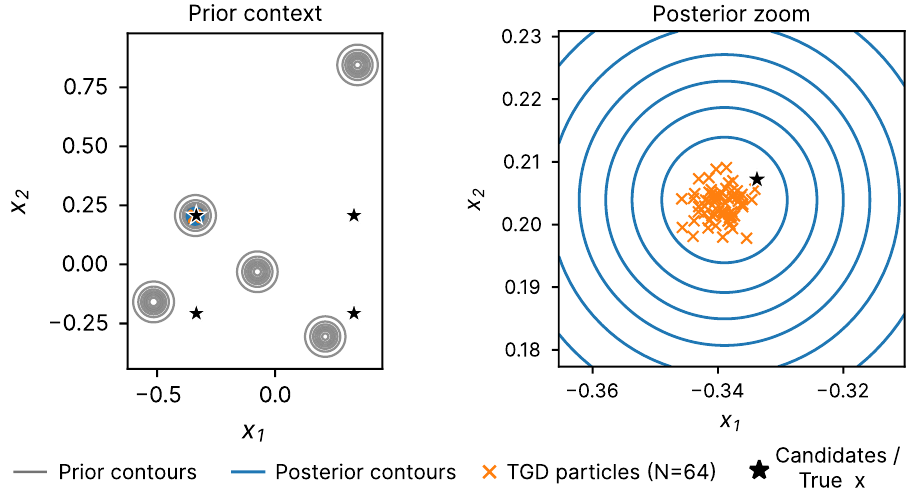}
        \caption{Prior context and posterior zoom.}
        \label{fig:toy2d_contours}
    \end{subfigure}
    \hfill
    \begin{subfigure}[t]{0.39\textwidth}
        \centering
        \includegraphics[height=0.17\textheight]{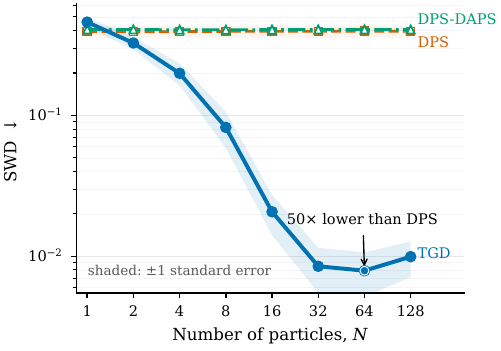}
        \caption{Posterior approximation vs. particle count.}
        \label{fig:toy2d_swd}
    \end{subfigure}

    \caption{
    \textbf{Controlled 2D inverse problem.}
    \textbf{Left:} prior context and posterior zoom for a representative test condition.
    Orange crosses denote the final TGD particles with \(N=64\); black markers denote the
    ground-truth clean sample and the sign-ambiguous candidates induced by the absolute-value
    observation.
    \textbf{Right:} SWD as a function of particle count. Bands denote \(\pm 1\) standard error
    over 10 test conditions. TGD improves rapidly with particle count, while DPS and DPS-DAPS
    do not benefit from additional independent particles in this setting. Lower is better.
    }
    \label{fig:toy2d_results}
\end{figure*}

We first evaluate whether the outer particle procedure improves posterior approximation in a controlled setting where the posterior is known. We consider a two-dimensional inverse problem with a Gaussian-mixture prior and noisy elementwise absolute-value observations,
\[
\by = |\bx_0| + \boldsymbol{\epsilon},
\qquad
\boldsymbol{\epsilon} \sim \mathcal{N}(0,\tau^2 I).
\]
 To isolate the particle inference mechanism from score-model error, we use the ground-truth unconditional score and compare final particles to reference posterior samples using SWD.

Figure~\ref{fig:toy2d_results} compares TGD to baselines and ablations as the number of particles increases. To make the comparison focus on the outer inference procedure, all methods use the DPS reconstruction proposal. TGD combines likelihood tempering with resampling, while the DPS-DAPS ablation removes these outer particle updates. TGD is the only method whose SWD consistently decreases toward zero as the number of particles grows, empirically matching the behavior predicted by the idealized particle construction. 

\subsection{Image inverse problems at matched runtime}
\label{sec:image-experiments}

Table~\ref{tab:image-results} reports reconstruction quality at matched wall-clock runtime. Across both inpainting datasets, A-TGD achieves the best LPIPS and SSIM, while remaining competitive in PSNR. On FFHQ phase retrieval, where independent guided trajectories vary substantially in quality, A-TGD improves over DAPS $(N=4)$ across all metrics. On ImageNet phase retrieval, the gains are more mixed: A-TGD matches DAPS $(N=4)$ in PSNR, improves SSIM slightly, but does not improve LPIPS at the matched budget. Overall, these results suggest that A-TGD can use a fixed wall-clock budget more effectively than independent best-of-$N$ sampling, especially when early multi-trajectory exploration identifies a high-likelihood trajectory that can be continued alone. Additional ablations on the number of particles, pruning fraction, annealing and resampling, and
choice of reconstruction module are provided in Appendix~\ref{app:ablations}.

\begin{table}[t]
  \caption{Image inverse-problem results at matched wall-clock runtime. DAPS ($N=4$) runs four independent trajectories and selects by measurement error; A-TGD starts with four particles and prunes to one trajectory.}
  \label{tab:image-results}
  \centering
  \small
  \begin{tabular}{lllccc}
    \toprule
    Task & Dataset & Algorithm & PSNR $\uparrow$ & SSIM $\uparrow$ & LPIPS $\downarrow$ \\
    \midrule

    \multirow{8}{*}{Inpainting}
      & \multirow{4}{*}{FFHQ}
        & A-TGD (ours) & \bestms{25.14}{2.54} & \bestms{0.853}{0.025} & \bestms{0.122}{0.028} \\
      & & DAPS ($N=1$) & \ms{24.93}{2.57} & \ms{0.801}{0.028} & \ms{0.153}{0.033} \\
      & & DAPS ($N=4$) & \ms{24.52}{2.25} & \ms{0.790}{0.030} & \ms{0.157}{0.034} \\
      & & DPS & \ms{22.12}{2.51} & \ms{0.806}{0.030} & \ms{0.167}{0.033} \\
      \cmidrule(lr){2-6}
      & \multirow{4}{*}{ImageNet}
        & A-TGD (ours) & \bestms{20.89}{3.67} & \bestms{0.781}{0.037} & \bestms{0.204}{0.036} \\
      & & DAPS ($N=1$) & \ms{20.87}{3.65} & \ms{0.753}{0.035} & \ms{0.218}{0.044} \\
      & & DAPS ($N=4$) & \ms{20.70}{3.42} & \ms{0.740}{0.033} & \ms{0.227}{0.044} \\
      & &  DPS & \ms{18.39}{2.57} & \ms{0.710}{0.069} & \ms{0.281}{0.069} \\

    \midrule

    \multirow{8}{*}{Phase Retrieval}
      & \multirow{4}{*}{FFHQ}
        & A-TGD (ours) & \bestms{29.43}{4.07} & \bestms{0.811}{0.104} & \bestms{0.172}{0.106} \\
      & & DAPS ($N=1$) & \ms{26.98}{8.35} & \ms{0.767}{0.193} & \ms{0.225}{0.185} \\
      & & DAPS ($N=4$) & \ms{27.81}{5.12} & \ms{0.765}{0.128} & \ms{0.207}{0.111} \\
      & &  DPS & \ms{12.65}{2.42} & \ms{0.331}{0.095} & \ms{0.564}{0.063} \\
      \cmidrule(lr){2-6}
      & \multirow{4}{*}{ImageNet}
        & A-TGD (ours) & \ms{22.18}{8.75} & \bestms{0.617}{0.276} & \ms{0.326}{0.218} \\
      & & DAPS ($N=1$) & \ms{19.42}{9.31} & \ms{0.499}{0.298} & \ms{0.413}{0.222} \\
      & & DAPS ($N=4$) & \bestms{22.41}{9.26} & \ms{0.593}{0.291} & \bestms{0.325}{0.221} \\
      & &  DPS & \ms{12.58}{2.48} & \ms{0.232}{0.140} & \ms{0.601}{0.064} \\

    \bottomrule
  \end{tabular}
\end{table}

\subsection{Speed-quality tradeoffs}
\label{sec:speed-quality}

The matched-runtime results in Table~\ref{tab:image-results} compare methods at a fixed budget. We further evaluate how reconstruction quality changes with wall-clock time on FFHQ phase retrieval, where independent trajectories exhibit high variability and best-of-$N$ selection is a strong baseline. Figure~\ref{fig:speed-quality} plots LPIPS as a function of seconds per image for DPS, DAPS $(N=1)$, DAPS $(N=4)$, and A-TGD. DPS remains poor even as the number of function evaluations increases, while DAPS generally improves with runtime, especially for the best-of-four setting, but it must pay the cost of all independent trajectories throughout sampling. A-TGD achieves a better speed-quality curve by using multiple trajectories early and then pruning to a single high-likelihood trajectory for the remaining stages.

Figure~\ref{fig:speed-quality}(b) summarizes this tradeoff by reporting the speedup of A-TGD over DAPS $(N=4)$ at fixed LPIPS thresholds. A-TGD reaches the same perceptual-quality thresholds faster across the evaluated range, suggesting that pruning retains much of the benefit of early multi-trajectory exploration while avoiding the cost of propagating all trajectories to completion.

\begin{figure}[t]
    \centering

    \begin{subfigure}{0.56\linewidth}
        \centering
        \includegraphics[width=\linewidth]{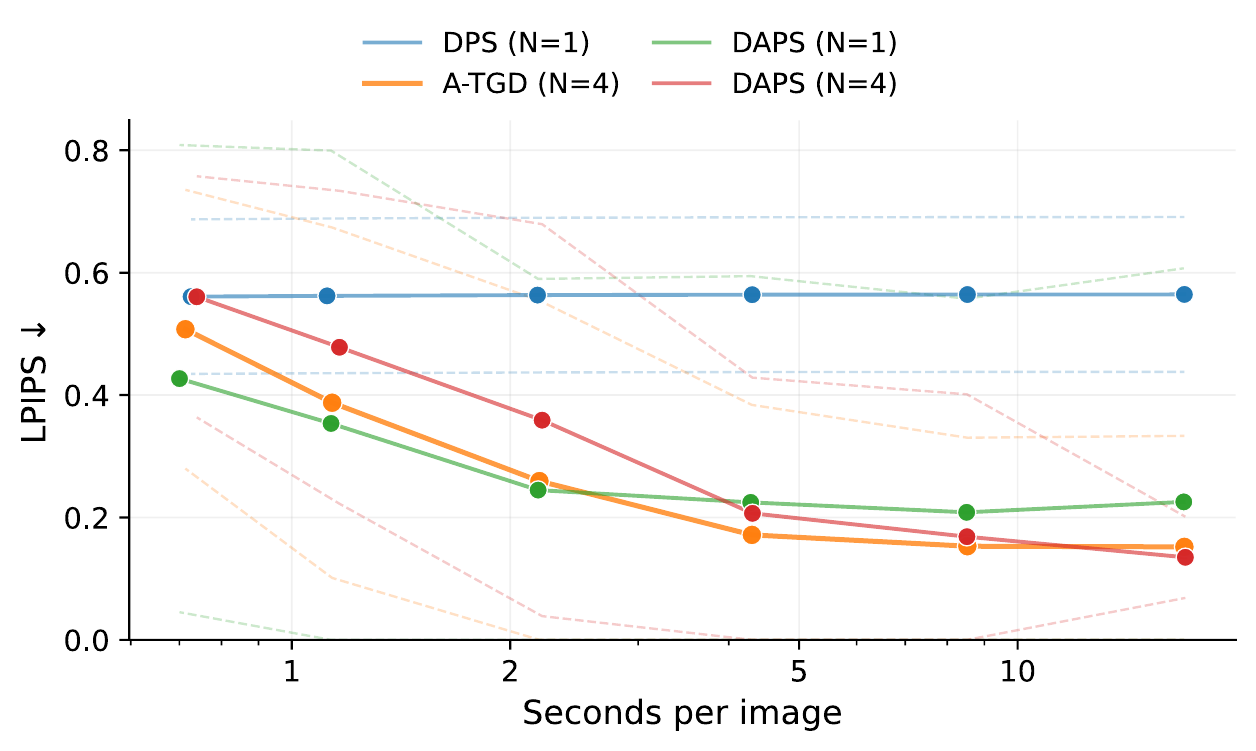}
        \caption{LPIPS vs. wall-clock time}
        \label{fig:ffhq_pr_time_vs_lpips}
    \end{subfigure}
    \hfill
    \begin{subfigure}{0.42\linewidth}
        \centering
        \includegraphics[width=\linewidth]{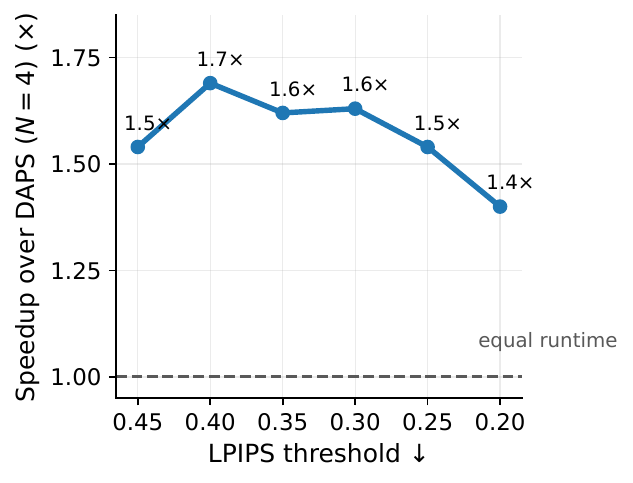}
        \caption{Speedup to LPIPS threshold}
        \label{fig:ffhq_pr_speedup_lpips}
    \end{subfigure}

    \caption{\textbf{Speed-quality tradeoff on FFHQ phase retrieval.}
    \textbf{(a)} LPIPS as a function of wall-clock time per image for DPS, DAPS $(N=1)$, DAPS $(N=4)$, and A-TGD.
    \textbf{(b)} Speedup of A-TGD over DAPS $(N=4)$ to reach fixed LPIPS thresholds.}
    \label{fig:speed-quality}
\end{figure}

\begin{figure}[t]
    \centering
    \includegraphics[width=\linewidth]{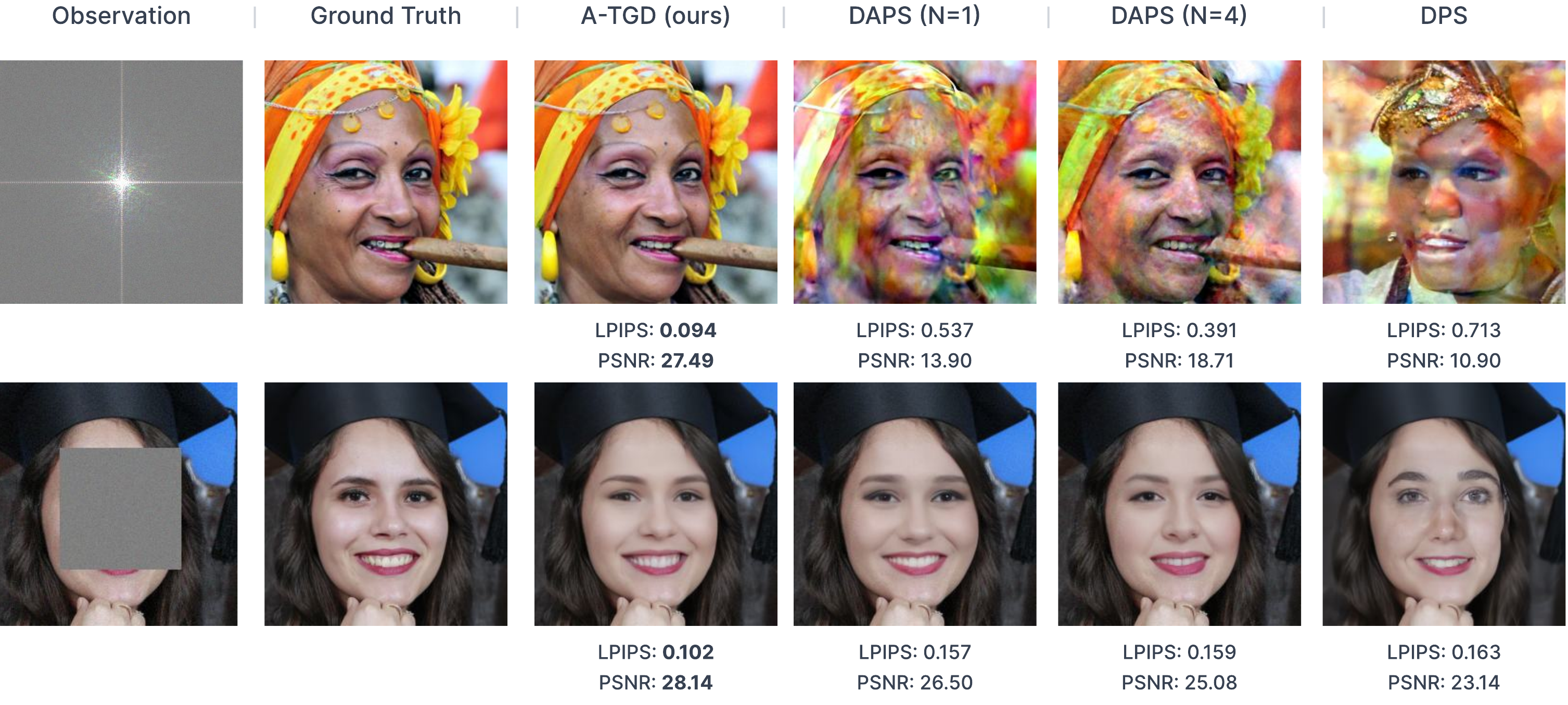}
    \caption{\textbf{Qualitative reconstructions on FFHQ.}
    We show one inpainting example and one phase retrieval example. A-TGD starts with four particles and prunes to one trajectory, while DAPS $(N=4)$ runs four independent trajectories and selects the final reconstruction with lowest measurement error.}
    \label{fig:qualitative}
\end{figure}

Figure~\ref{fig:qualitative} shows representative FFHQ reconstructions for inpainting and phase retrieval. The examples are consistent with Table~\ref{tab:image-results}: A-TGD preserves perceptual quality while improving observation consistency relative to independent trajectory baselines. In phase retrieval, DPS often fails to recover a plausible image, while DAPS improves substantially and A-TGD further benefits from early multi-trajectory exploration followed by pruning.

\section{Discussion and Limitations}
\label{sec:discussion}

TGD separates clean-space particle inference from the conditional reconstruction module, yielding an idealized SMC construction while allowing practical implementations to reuse existing training-free diffusion solvers. The theoretical guarantee assumes exact stagewise reconstruction and an unpruned particle procedure; image experiments instead use approximate reconstruction modules, learned unconditional scores, and A-TGD pruning to obtain a single reconstruction under a fixed compute budget. The usefulness of particle-based exploration depends on the inverse problem: it is most effective when early trajectories reach different plausible reconstruction basins and the likelihood reliably selects among them. When the image prior and observation effectively determine a single solution, aggressive early resampling or pruning can discard a good trajectory and increase variance, especially in challenging phase retrieval settings.

\bibliographystyle{plainnat}
\bibliography{references}

@Inbook{delmoral2004feynman,
author="Del Moral, Pierre",
title="Feynman-Kac Formulae",
bookTitle="Feynman-Kac Formulae: Genealogical and Interacting Particle Systems with Applications",
year="2004",
publisher="Springer New York",
address="New York, NY",
pages="47--93",
isbn="978-1-4684-9393-1",
doi="10.1007/978-1-4684-9393-1_2",
url="https://doi.org/10.1007/978-1-4684-9393-1_2"
}

@article{nemeth2019pseudo,
  title={Pseudo-extended markov chain monte carlo},
  author={Nemeth, Christopher and Lindsten, Fredrik and Filippone, Maurizio and Hensman, James},
  journal={Advances in Neural Information Processing Systems},
  volume={32},
  year={2019}
}

@misc{karras2022elucidatingdesignspacediffusionbased,
      title={Elucidating the Design Space of Diffusion-Based Generative Models}, 
      author={Tero Karras and Miika Aittala and Timo Aila and Samuli Laine},
      year={2022},
      eprint={2206.00364},
      archivePrefix={arXiv},
      primaryClass={cs.CV},
      url={https://arxiv.org/abs/2206.00364}, 
}

@misc{ho2020denoisingdiffusionprobabilisticmodels,
      title={Denoising Diffusion Probabilistic Models}, 
      author={Jonathan Ho and Ajay Jain and Pieter Abbeel},
      year={2020},
      eprint={2006.11239},
      archivePrefix={arXiv},
      primaryClass={cs.LG},
      url={https://arxiv.org/abs/2006.11239}, 
}

@misc{song2021scorebasedgenerativemodelingstochastic,
      title={Score-Based Generative Modeling through Stochastic Differential Equations}, 
      author={Yang Song and Jascha Sohl-Dickstein and Diederik P. Kingma and Abhishek Kumar and Stefano Ermon and Ben Poole},
      year={2021},
      eprint={2011.13456},
      archivePrefix={arXiv},
      primaryClass={cs.LG},
      url={https://arxiv.org/abs/2011.13456}, 
}

@misc{ho2022classifierfreediffusionguidance,
      title={Classifier-Free Diffusion Guidance}, 
      author={Jonathan Ho and Tim Salimans},
      year={2022},
      eprint={2207.12598},
      archivePrefix={arXiv},
      primaryClass={cs.LG},
      url={https://arxiv.org/abs/2207.12598}, 
}

@misc{dhariwal2021diffusionmodelsbeatgans,
      title={Diffusion Models Beat GANs on Image Synthesis}, 
      author={Prafulla Dhariwal and Alex Nichol},
      year={2021},
      eprint={2105.05233},
      archivePrefix={arXiv},
      primaryClass={cs.LG},
      url={https://arxiv.org/abs/2105.05233}, 
}

@misc{chung2024diffusionposteriorsamplinggeneral,
      title={Diffusion Posterior Sampling for General Noisy Inverse Problems}, 
      author={Hyungjin Chung and Jeongsol Kim and Michael T. Mccann and Marc L. Klasky and Jong Chul Ye},
      year={2024},
      eprint={2209.14687},
      archivePrefix={arXiv},
      primaryClass={stat.ML},
      url={https://arxiv.org/abs/2209.14687}, 
}

@inproceedings{sharrock2024tuning,
  title={Tuning-free maximum likelihood training of latent variable models via coin betting},
  author={Sharrock, Louis and Dodd, Daniel and Nemeth, Christopher},
  booktitle={International Conference on Artificial Intelligence and Statistics},
  pages={1810--1818},
  year={2024},
  organization={PMLR}
}

@misc{he2023manifoldpreservingguideddiffusion,
      title={Manifold Preserving Guided Diffusion}, 
      author={Yutong He and Naoki Murata and Chieh-Hsin Lai and Yuhta Takida and Toshimitsu Uesaka and Dongjun Kim and Wei-Hsiang Liao and Yuki Mitsufuji and J. Zico Kolter and Ruslan Salakhutdinov and Stefano Ermon},
      year={2023},
      eprint={2311.16424},
      archivePrefix={arXiv},
      primaryClass={cs.LG},
      url={https://arxiv.org/abs/2311.16424}, 
}

@inproceedings{Zhang_2025,
   title={Improving Diffusion Inverse Problem Solving with Decoupled Noise Annealing},
   url={http://dx.doi.org/10.1109/CVPR52734.2025.01946},
   DOI={10.1109/cvpr52734.2025.01946},
   booktitle={2025 IEEE/CVF Conference on Computer Vision and Pattern Recognition (CVPR)},
   publisher={IEEE},
   author={Zhang, Bingliang and Chu, Wenda and Berner, Julius and Meng, Chenlin and Anandkumar, Anima and Song, Yang},
   year={2025},
   month=jun, pages={20895–20905} }

@article{vyner2023swiss,
  title={SwISS: A scalable Markov chain Monte Carlo divide-and-conquer strategy},
  author={Vyner, Callum and Nemeth, Christopher and Sherlock, Chris},
  journal={Stat},
  volume={12},
  number={1},
  pages={e523},
  year={2023},
  publisher={Wiley Online Library}
}

@misc{ahmed2025solvingdiffusioninverseproblems,
      title={Solving Diffusion Inverse Problems with Restart Posterior Sampling}, 
      author={Bilal Ahmed and Joseph G. Makin},
      year={2025},
      eprint={2511.20705},
      archivePrefix={arXiv},
      primaryClass={cs.LG},
      url={https://arxiv.org/abs/2511.20705}, 
}

@misc{wu2024practicalasymptoticallyexactconditional,
      title={Practical and Asymptotically Exact Conditional Sampling in Diffusion Models}, 
      author={Luhuan Wu and Brian L. Trippe and Christian A. Naesseth and David M. Blei and John P. Cunningham},
      year={2024},
      eprint={2306.17775},
      archivePrefix={arXiv},
      primaryClass={stat.ML},
      url={https://arxiv.org/abs/2306.17775}, 
}

@article{cabezas2024markovian,
  title={Markovian flow matching: Accelerating MCMC with continuous normalizing flows},
  author={Cabezas, Alberto and Sharrock, Louis and Nemeth, Christopher},
  journal={Advances in Neural Information Processing Systems},
  volume={37},
  pages={104383--104411},
  year={2024}
}

@book{fearnhead2025scalable, place={Cambridge}, series={Institute of Mathematical Statistics Monographs}, title={Scalable Monte Carlo for Bayesian Learning}, publisher={Cambridge University Press}, author={Fearnhead, Paul and Nemeth, Christopher and Oates, Chris J. and Sherlock, Chris}, year={2025}, collection={Institute of Mathematical Statistics Monographs}}

@inproceedings{tfg,
 author = {Ye, Haotian and Lin, Haowei and Han, Jiaqi and Xu, Minkai and Liu, Sheng and Liang, Yitao and Ma, Jianzhu and Zou, James and Ermon, Stefano},
 booktitle = {Advances in Neural Information Processing Systems},
 doi = {10.52202/079017-0704},
 editor = {A. Globerson and L. Mackey and D. Belgrave and A. Fan and U. Paquet and J. Tomczak and C. Zhang},
 pages = {22370--22417},
 publisher = {Curran Associates, Inc.},
 title = {TFG: Unified Training-Free Guidance for Diffusion Models},
 url = {https://proceedings.neurips.cc/paper_files/paper/2024/file/2818054fc6de6dacdda0f142a3475933-Paper-Conference.pdf},
 volume = {37},
 year = {2024}
}

@inproceedings{
trippe2023diffusion,
title={Diffusion Probabilistic Modeling of Protein Backbones in 3D for the motif-scaffolding problem},
author={Brian L. Trippe and Jason Yim and Doug Tischer and David Baker and Tamara Broderick and Regina Barzilay and Tommi S. Jaakkola},
booktitle={The Eleventh International Conference on Learning Representations },
year={2023},
url={https://openreview.net/forum?id=6TxBxqNME1Y}
}

@inproceedings{
fps_smc,
title={Diffusion Posterior Sampling for Linear Inverse Problem Solving: A Filtering Perspective},
author={Zehao Dou and Yang Song},
booktitle={The Twelfth International Conference on Learning Representations},
year={2024},
url={https://openreview.net/forum?id=tplXNcHZs1}
}

@inproceedings{
autogduidance,
title={Guiding a Diffusion Model with a Bad Version of Itself},
author={Tero Karras and Miika Aittala and Tuomas Kynk{\"a}{\"a}nniemi and Jaakko Lehtinen and Timo Aila and Samuli Laine},
booktitle={The Thirty-eighth Annual Conference on Neural Information Processing Systems},
year={2024},
url={https://openreview.net/forum?id=bg6fVPVs3s}
}

@inproceedings{
free_hunch,
title={Free Hunch: Denoiser Covariance Estimation for Diffusion Models Without Extra Costs},
author={Severi Rissanen and Markus Heinonen and Arno Solin},
booktitle={The Thirteenth International Conference on Learning Representations},
year={2025},
url={https://openreview.net/forum?id=4JK2XMGUc8}
}

@inproceedings{
bansal2024universal,
title={Universal Guidance for Diffusion Models},
author={Arpit Bansal and Hong-Min Chu and Avi Schwarzschild and Roni Sengupta and Micah Goldblum and Jonas Geiping and Tom Goldstein},
booktitle={The Twelfth International Conference on Learning Representations},
year={2024},
url={https://openreview.net/forum?id=pzpWBbnwiJ}
}

@inproceedings{
pgdm,
title={Pseudoinverse-Guided Diffusion Models for Inverse Problems},
author={Jiaming Song and Arash Vahdat and Morteza Mardani and Jan Kautz},
booktitle={International Conference on Learning Representations},
year={2023},
url={https://openreview.net/forum?id=9_gsMA8MRKQ}
}

@article{ANDERSON1982313,
title = {Reverse-time diffusion equation models},
journal = {Stochastic Processes and their Applications},
volume = {12},
number = {3},
pages = {313-326},
year = {1982},
issn = {0304-4149},
doi = {https://doi.org/10.1016/0304-4149(82)90051-5},
url = {https://www.sciencedirect.com/science/article/pii/0304414982900515},
author = {Brian D.O. Anderson}
}

@book{murphy2022probabilistic,
  title={Probabilistic machine learning: an introduction},
  author={Murphy, Kevin P},
  year={2022},
  publisher={MIT press}
}

@article{yu2023freedom,
title={FreeDoM: Training-Free Energy-Guided Conditional Diffusion Model},
author={Yu, Jiwen and Wang, Yinhuai and Zhao, Chen and Ghanem, Bernard and Zhang, Jian},
journal={Proceedings of the IEEE/CVF International Conference on Computer Vision (ICCV)},
year={2023}
}

@InProceedings{repaint,
  author    = {Lugmayr, Andreas and Danelljan, Martin and Romero, Andres and Yu, Fisher and Timofte, Radu and Van Gool, Luc},
  title     = {RePaint: Inpainting Using Denoising Diffusion Probabilistic Models},
  booktitle = {Proceedings of the IEEE/CVF Conference on Computer Vision and Pattern Recognition (CVPR)},
  year      = {2022},
}

@book{gelman1995bayesian,
  title={Bayesian data analysis},
  author={Gelman, Andrew and Carlin, John B and Stern, Hal S and Rubin, Donald B},
  year={1995},
  publisher={Chapman and Hall/CRC}
}

@article{smc_elements,
author = {Naesseth, Christian A. and Lindsten, Fredrik and Sch\"{o}n, Thomas B.},
title = {Elements of Sequential Monte Carlo},
year = {2019},
issue_date = {Nov 2019},
publisher = {Now Publishers Inc.},
address = {Hanover, MA, USA},
volume = {12},
number = {3},
issn = {1935-8237},
url = {https://doi.org/10.1561/2200000074},
doi = {10.1561/2200000074},
journal = {Found. Trends Mach. Learn.},
month = nov,
pages = {307–392},
numpages = {123}
}

@article{bonneel2015sliced,
  author  = {Bonneel, Nicolas and Rabin, Julien and Peyr{\'e}, Gabriel and Pfister, Hanspeter},
  title   = {Sliced and Radon Wasserstein Barycenters of Measures},
  journal = {Journal of Mathematical Imaging and Vision},
  volume  = {51},
  pages   = {22--45},
  year    = {2015},
  doi     = {10.1007/s10851-014-0506-3},
  url     = {https://doi.org/10.1007/s10851-014-0506-3}
}

@misc{flamary2024pot,
  author = {Flamary, R{\'e}mi and Vincent-Cuaz, C{\'e}dric and Courty, Nicolas and Gramfort, Alexandre and Kachaiev, Oleksii and Quang Tran, Huy and David, Laurène and Bonet, Cl{\'e}ment and Cassereau, Nathan and Gnassounou, Th{\'e}o and Tanguy, Eloi and Delon, Julie and Collas, Antoine and Mazelet, Sonia and Chapel, Laetitia and Kerdoncuff, Tanguy and Yu, Xizheng and Feickert, Matthew and Krzakala, Paul and Liu, Tianlin and Fernandes Montesuma, Eduardo},
  title = {POT Python Optimal Transport (version 0.9.5)},
  url = {https://github.com/PythonOT/POT},
  year = {2024}
}

@article{flamary2021pot,
  author  = {R{\'e}mi Flamary and Nicolas Courty and Alexandre Gramfort and Mokhtar Z. Alaya and Aur{\'e}lie Boisbunon and Stanislas Chambon and Laetitia Chapel and Adrien Corenflos and Kilian Fatras and Nemo Fournier and L{\'e}o Gautheron and Nathalie T.H. Gayraud and Hicham Janati and Alain Rakotomamonjy and Ievgen Redko and Antoine Rolet and Antony Schutz and Vivien Seguy and Danica J. Sutherland and Romain Tavenard and Alexander Tong and Titouan Vayer},
  title   = {POT: Python Optimal Transport},
  journal = {Journal of Machine Learning Research},
  year    = {2021},
  volume  = {22},
  number  = {78},
  pages   = {1-8},
  url     = {http://jmlr.org/papers/v22/20-451.html}
}

@article{neal1996sampling,
  title={Sampling from multimodal distributions using tempered transitions},
  author={Neal, Radford M},
  journal={Statistics and computing},
  volume={6},
  number={4},
  pages={353--366},
  year={1996},
  publisher={Springer}
}

@InProceedings{sohldickstein2015deep,
  title = 	 {Deep Unsupervised Learning using Nonequilibrium Thermodynamics},
  author = 	 {Sohl-Dickstein, Jascha and Weiss, Eric and Maheswaranathan, Niru and Ganguli, Surya},
  booktitle = 	 {Proceedings of the 32nd International Conference on Machine Learning},
  pages = 	 {2256--2265},
  year = 	 {2015},
  editor = 	 {Bach, Francis and Blei, David},
  volume = 	 {37},
  series = 	 {Proceedings of Machine Learning Research},
  address = 	 {Lille, France},
  month = 	 {07--09 Jul},
  publisher =    {PMLR},
  url = 	 {https://proceedings.mlr.press/v37/sohl-dickstein15.html}
}

@inproceedings{song2019generative,
 author = {Song, Yang and Ermon, Stefano},
 booktitle = {Advances in Neural Information Processing Systems},
 editor = {H. Wallach and H. Larochelle and A. Beygelzimer and F. d\textquotesingle Alch\'{e}-Buc and E. Fox and R. Garnett},
 pages = {},
 publisher = {Curran Associates, Inc.},
 title = {Generative Modeling by Estimating Gradients of the Data Distribution},
 url = {https://proceedings.neurips.cc/paper_files/paper/2019/file/3001ef257407d5a371a96dcd947c7d93-Paper.pdf},
 volume = {32},
 year = {2019}
}

@incollection{scott2022bayes,
  title={Bayes and big data: The consensus Monte Carlo algorithm},
  author={Scott, Steven L and Blocker, Alexander W and Bonassi, Fernando V and Chipman, Hugh A and George, Edward I and McCulloch, Robert E},
  booktitle={Big Data and Information Theory},
  pages={8--18},
  year={2022},
  publisher={Routledge}
}

@inproceedings{kuntz2023particle,
  title={Particle algorithms for maximum likelihood training of latent variable models},
  author={Kuntz, Juan and Lim, Jen Ning and Johansen, Adam M},
  booktitle={International Conference on Artificial Intelligence and Statistics},
  pages={5134--5180},
  year={2023},
  organization={PMLR}
}

@inproceedings{song2020improved,
 author = {Song, Yang and Ermon, Stefano},
 booktitle = {Advances in Neural Information Processing Systems},
 editor = {H. Larochelle and M. Ranzato and R. Hadsell and M.F. Balcan and H. Lin},
 pages = {12438--12448},
 publisher = {Curran Associates, Inc.},
 title = {Improved Techniques for Training Score-Based Generative Models},
 url = {https://proceedings.neurips.cc/paper_files/paper/2020/file/92c3b916311a5517d9290576e3ea37ad-Paper.pdf},
 volume = {33},
 year = {2020}
}

@inproceedings{
song2021denoising,
title={Denoising Diffusion Implicit Models},
author={Jiaming Song and Chenlin Meng and Stefano Ermon},
booktitle={International Conference on Learning Representations},
year={2021},
url={https://openreview.net/forum?id=St1giarCHLP}
}

@inproceedings{nichol2021improved,
  title     = {Improved Denoising Diffusion Probabilistic Models},
  author    = {Nichol, Alex and Dhariwal, Prafulla},
  booktitle = {Proceedings of the 38th International Conference on Machine Learning},
  series    = {Proceedings of Machine Learning Research},
  volume    = {139},
  pages     = {8162--8171},
  year      = {2021},
  publisher = {PMLR},
  url       = {https://proceedings.mlr.press/v139/nichol21a.html}
}

@inproceedings{
pandey2025variational,
title={Variational Control for Guidance in Diffusion Models},
author={Kushagra Pandey and Farrin Marouf Sofian and Felix Draxler and Theofanis Karaletsos and Stephan Mandt},
booktitle={Forty-second International Conference on Machine Learning},
year={2025},
url={https://openreview.net/forum?id=Z0ffRRtOim}
}

@inproceedings{
gao2025rectified,
title={{REG}: Rectified Gradient Guidance for Conditional Diffusion Models},
author={Zhengqi Gao and Kaiwen Zha and Tianyuan Zhang and Zihui Xue and Duane S Boning},
booktitle={Forty-second International Conference on Machine Learning},
year={2025},
url={https://openreview.net/forum?id=uNK4ftGdnq}
}

@inproceedings{
song2022solving,
title={Solving Inverse Problems in Medical Imaging with Score-Based Generative Models},
author={Yang Song and Liyue Shen and Lei Xing and Stefano Ermon},
booktitle={International Conference on Learning Representations},
year={2022},
url={https://openreview.net/forum?id=vaRCHVj0uGI}
}

@article{kawar2021snips,
  title={{SNIPS}: Solving noisy inverse problems stochastically},
  author={Kawar, Bahjat and Vaksman, Gregory and Elad, Michael},
  journal={Advances in Neural Information Processing Systems},
  volume={34},
  pages={21757--21769},
  year={2021}
}

@inproceedings{kawar2022denoising,
 author = {Kawar, Bahjat and Elad, Michael and Ermon, Stefano and Song, Jiaming},
 booktitle = {Advances in Neural Information Processing Systems},
 editor = {S. Koyejo and S. Mohamed and A. Agarwal and D. Belgrave and K. Cho and A. Oh},
 pages = {23593--23606},
 publisher = {Curran Associates, Inc.},
 title = {Denoising Diffusion Restoration Models},
 url = {https://proceedings.neurips.cc/paper_files/paper/2022/file/95504595b6169131b6ed6cd72eb05616-Paper-Conference.pdf},
 volume = {35},
 year = {2022}
}

@inproceedings{
wang2023zero,
title={Zero-Shot Image Restoration Using Denoising Diffusion Null-Space Model},
author={Yinhuai Wang and Jiwen Yu and Jian Zhang},
booktitle={The Eleventh International Conference on Learning Representations },
year={2023},
url={https://openreview.net/forum?id=mRieQgMtNTQ}
}

@inproceedings{zhu2023denoising,
      title={Denoising Diffusion Models for Plug-and-Play Image Restoration},
      author={Yuanzhi Zhu and Kai Zhang and Jingyun Liang and Jiezhang Cao and Bihan Wen and Radu Timofte and Luc Van Gool},
      booktitle={IEEE Conference on Computer Vision and Pattern Recognition Workshops (NTIRE)},
      year={2023},
}

@article{mardani2023variational,
  title={A Variational Perspective on Solving Inverse Problems with Diffusion Models},
  author={Mardani, Morteza and Song, Jiaming and Kautz, Jan and Vahdat, Arash},
  journal={arXiv preprint arXiv:2305.04391},
  year={2023}
}

@inproceedings{murata2023gibbsddrm,
  title={Gibbs{DDRM}: A Partially Collapsed Gibbs Sampler for Solving Blind Inverse Problems with Denoising Diffusion Restoration},
  author={Murata, Naoki and Saito, Koichi and Lai, Chieh-Hsin and Takida, Yuhta and Uesaka, Toshimitsu and Mitsufuji, Yuki and Ermon, Stefano},
  booktitle={International Conference on Machine Learning},
  year={2023}
}

@inproceedings{
chung2022improving,
title={Improving Diffusion Models for Inverse Problems using Manifold Constraints},
author={Hyungjin Chung and Byeongsu Sim and Dohoon Ryu and Jong Chul Ye},
booktitle={Advances in Neural Information Processing Systems},
editor={Alice H. Oh and Alekh Agarwal and Danielle Belgrave and Kyunghyun Cho},
year={2022},
url={https://openreview.net/forum?id=nJJjv0JDJju}
}

@book{doucet2001sequential,
  title     = {Sequential Monte Carlo Methods in Practice},
  author    = {Doucet, Arnaud and de Freitas, Nando and Gordon, Neil},
  year      = {2001},
  publisher = {Springer},
  address   = {New York},
  series    = {Statistics for Engineering and Information Science}
}

@techreport{chopin2004introduction,
  title       = {An Introduction to Sequential Monte Carlo},
  author      = {Chopin, Nicolas},
  institution = {University of Bristol},
  year        = {2004},
  note        = {Technical report}
}

@article{delmoral2006sequential,
  title   = {Sequential Monte Carlo Samplers},
  author  = {Del Moral, Pierre and Doucet, Arnaud and Jasra, Ajay},
  journal = {Journal of the Royal Statistical Society: Series B (Statistical Methodology)},
  volume  = {68},
  number  = {3},
  pages   = {411--436},
  year    = {2006},
  publisher = {Wiley}
}

@inproceedings{karras2019style,
  title     = {A Style-Based Generator Architecture for Generative Adversarial Networks},
  author    = {Karras, Tero and Laine, Samuli and Aila, Timo},
  booktitle = {Proceedings of the IEEE/CVF Conference on Computer Vision and Pattern Recognition (CVPR)},
  pages     = {4401--4410},
  year      = {2019}
}

@inproceedings{deng2009imagenet,
  title     = {ImageNet: A Large-Scale Hierarchical Image Database},
  author    = {Deng, Jia and Dong, Wei and Socher, Richard and Li, Li-Jia and Li, Kai and Fei-Fei, Li},
  booktitle = {Proceedings of the IEEE Conference on Computer Vision and Pattern Recognition (CVPR)},
  pages     = {248--255},
  year      = {2009}
}

@inproceedings{zhang2018unreasonable,
  title     = {The Unreasonable Effectiveness of Deep Features as a Perceptual Metric},
  author    = {Zhang, Richard and Isola, Phillip and Efros, Alexei A. and Shechtman, Eli and Wang, Oliver},
  booktitle = {Proceedings of the IEEE Conference on Computer Vision and Pattern Recognition (CVPR)},
  pages     = {586--595},
  year      = {2018}
}

@misc{neal1998annealedimportancesampling,
      title={Annealed Importance Sampling}, 
      author={Radford M. Neal},
      year={1998},
      eprint={physics/9803008},
      archivePrefix={arXiv},
      primaryClass={physics.comp-ph},
      url={https://arxiv.org/abs/physics/9803008}, 
}

\newpage

\appendix

\section{Lifted SMC Construction and Incremental Weights}
\label{app:smc-weight-derivations}

We derive the incremental weight update used in the stagewise TGD construction.
Throughout this appendix, densities are understood up to normalizing constants that do not depend on the particle index, since SMC weights are normalized after each outer stage. For compactness, write
\[
    \ell(\bx_0) := p(\by \mid \bx_0).
\]

Recall that the clean-space annealed target at outer stage \(r\) is
\begin{equation}
    \pi_r(\bx_0)
    \propto
    p(\bx_0)\ell(\bx_0)^{\lambda_r},
    \qquad
    0 \leq \lambda_R \leq \cdots \leq \lambda_0 = 1 .
    \label{eq:app-clean-target}
\end{equation}
To connect this target to the diffusion prior, TGD introduces an auxiliary noisy state
\(\bz_r\) at noise level \(s_r\). Define the unnormalized lifted density
\begin{equation}
    \gamma_r(\bx_0,\bz_r)
    :=
    p(\bx_0)\ell(\bx_0)^{\lambda_r}p_{s_r}(\bz_r \mid \bx_0).
    \label{eq:app-lifted-density}
\end{equation}
Its clean marginal is proportional to \(\pi_r\), since
\(\int p_{s_r}(\bz_r\mid \bx_0)\,d\bz_r = 1\). Its noisy marginal is
\begin{equation}
    \eta_r(\bz_r)
    :=
    \int \gamma_r(\bx_0,\bz_r)\,d\bx_0 .
    \label{eq:app-noisy-marginal}
\end{equation}

The ideal TGD reconstruction kernel at stage \(r\) is the conditional distribution induced by
\(\gamma_r\):
\begin{equation}
    K_r(d\bx_0 \mid \bz_r)
    :=
    p_{\lambda_r,s_r}(\bx_0 \mid \bz_r,\by)\,d\bx_0
    =
    \frac{\gamma_r(\bx_0,\bz_r)}{\eta_r(\bz_r)}\,d\bx_0 .
    \label{eq:app-reconstruction-kernel}
\end{equation}
Equivalently,
\begin{equation}
    p_{\lambda_r,s_r}(\bx_0 \mid \bz_r,\by)
    \propto
    p_{s_r}(\bx_0\mid \bz_r)\ell(\bx_0)^{\lambda_r}
    \propto
    p(\bx_0)p_{s_r}(\bz_r\mid \bx_0)\ell(\bx_0)^{\lambda_r}.
    \label{eq:app-reconstruction-law}
\end{equation}

Suppose a particle \(\bz_r\) targets the noisy marginal \(\eta_r\). One ideal TGD transition first samples
\[
    \bx_0 \sim K_r(\cdot \mid \bz_r),
\]
and then re-noises the retained clean sample to the next noise level:
\[
    \bz_{r-1} \sim p_{s_{r-1}}(\cdot \mid \bx_0).
\]
Thus the proposal kernel from \(\bz_r\) to \((\bx_0,\bz_{r-1})\) is
\begin{equation}
    Q_r(d\bx_0,d\bz_{r-1}\mid \bz_r)
    =
    K_r(d\bx_0\mid \bz_r)\,
    p_{s_{r-1}}(d\bz_{r-1}\mid \bx_0).
    \label{eq:app-proposal-kernel}
\end{equation}

To derive the incremental weight, define the following unnormalized extended target:
\begin{equation}
    \Gamma_{r-1}(\bz_r,\bx_0,\bz_{r-1})
    :=
    p(\bx_0)\ell(\bx_0)^{\lambda_{r-1}}
    p_{s_r}(\bz_r\mid \bx_0)
    p_{s_{r-1}}(\bz_{r-1}\mid \bx_0).
    \label{eq:app-extended-target}
\end{equation}
This is a valid extension of the next-stage lifted target because marginalizing out \(\bz_r\) gives
\begin{align}
    \int \Gamma_{r-1}(\bz_r,\bx_0,\bz_{r-1})\,d\bz_r
    &=
    p(\bx_0)\ell(\bx_0)^{\lambda_{r-1}}
    p_{s_{r-1}}(\bz_{r-1}\mid \bx_0)
    \int p_{s_r}(\bz_r\mid \bx_0)\,d\bz_r \nonumber \\
    &=
    p(\bx_0)\ell(\bx_0)^{\lambda_{r-1}}
    p_{s_{r-1}}(\bz_{r-1}\mid \bx_0)
    =
    \gamma_{r-1}(\bx_0,\bz_{r-1}).
    \label{eq:app-extended-marginal}
\end{align}

The standard SMC incremental weight associated with the proposal
\(\eta_r(\bz_r)Q_r(d\bx_0,d\bz_{r-1}\mid \bz_r)\) and the extended target
\(\Gamma_{r-1}\) is
\begin{equation}
    G_r(\bz_r,\bx_0,\bz_{r-1})
    =
    \frac{
        \Gamma_{r-1}(\bz_r,\bx_0,\bz_{r-1})
    }{
        \eta_r(\bz_r)Q_r(\bx_0,\bz_{r-1}\mid \bz_r)
    },
    \label{eq:app-incremental-weight-general}
\end{equation}
up to a stage-dependent normalizing constant.

Using \eqref{eq:app-reconstruction-kernel} and \eqref{eq:app-proposal-kernel}, the denominator becomes
\begin{align}
    \eta_r(\bz_r)Q_r(\bx_0,\bz_{r-1}\mid \bz_r)
    &=
    \eta_r(\bz_r)
    \frac{\gamma_r(\bx_0,\bz_r)}{\eta_r(\bz_r)}
    p_{s_{r-1}}(\bz_{r-1}\mid \bx_0) \nonumber \\
    &=
    \gamma_r(\bx_0,\bz_r)
    p_{s_{r-1}}(\bz_{r-1}\mid \bx_0) \nonumber \\
    &=
    p(\bx_0)\ell(\bx_0)^{\lambda_r}
    p_{s_r}(\bz_r\mid \bx_0)
    p_{s_{r-1}}(\bz_{r-1}\mid \bx_0).
    \label{eq:app-denominator}
\end{align}
Comparing \eqref{eq:app-denominator} with the extended target
\eqref{eq:app-extended-target}, all terms cancel except the change in likelihood tempering:
\begin{equation}
    G_r(\bz_r,\bx_0,\bz_{r-1})
    \propto
    \ell(\bx_0)^{\lambda_{r-1}-\lambda_r}
    =
    p(\by\mid \bx_0)^{\lambda_{r-1}-\lambda_r}.
    \label{eq:app-incremental-potential}
\end{equation}
Therefore, for particle \(i\),
\begin{equation}
    \widetilde w_{r-1}^{(i)}
    =
    w_r^{(i)}
    p(\by\mid \bx_0^{(i)})^{\lambda_{r-1}-\lambda_r}.
    \label{eq:app-tgd-weight-update}
\end{equation}

The incremental weight does not depend on the proposed auxiliary state \(\bz_{r-1}\). Consequently, the weighting and resampling step can be performed immediately after reconstruction, before re-noising to the next level. This gives exactly the stagewise update used by TGD in Section~\ref{sec:stagewise}.

\section{Consistency of Ideal TGD}
\label{app:consistency-tgd}

We state and prove the asymptotic consistency result for ideal TGD: the version of TGD in which each stagewise reconstruction step samples exactly from the tempered conditional law introduced in Section~\ref{sec:stagewise}. This result applies to the full particle procedure with exact reconstruction kernels. It does not apply to approximate reconstruction modules or to the pruning step used by A-TGD.

Throughout this appendix, all densities are taken with respect to fixed dominating measures, and we use the same notation for a density and its associated measure when the meaning is clear. For compactness, write
\[
    \ell(\bx_0) := p(\by \mid \bx_0).
\]

For \(r=0,\dots,R\), define the unnormalized clean-space density
\begin{equation}
    \gamma_r(\bx_0)
    :=
    p(\bx_0)\ell(\bx_0)^{\lambda_r},
    \qquad
    Z_r := \int \gamma_r(\bu)\,d\bu .
    \label{eq:appB-clean-unnormalized}
\end{equation}
The corresponding normalized clean-space target is
\begin{equation}
    \pi_r(d\bx_0)
    :=
    \frac{\gamma_r(\bx_0)}{Z_r}\,d\bx_0 .
    \label{eq:appB-clean-target}
\end{equation}
Since \(\lambda_0=1\), \(\pi_0\) is the posterior \(p(\bx_0\mid \by)\).

At outer stage \(r\), TGD uses an auxiliary noisy state \(\bz_r\) at noise level \(s_r\). The corresponding noisy auxiliary marginal is
\begin{equation}
    \eta_r(\bz_r)
    :=
    \int
    \frac{\gamma_r(\bx_0)}{Z_r}
    p_{s_r}(\bz_r\mid \bx_0)\,d\bx_0,
    \qquad
    \eta_r(d\bz_r) := \eta_r(\bz_r)\,d\bz_r .
    \label{eq:appB-noisy-marginal}
\end{equation}

\begin{theorem}[Consistency of ideal TGD]
\label{thm:appB-consistency}
Assume the following conditions.

\begin{enumerate}
    \item \textbf{Well-defined targets.}
    For every \(r=0,\dots,R\), \(0<Z_r<\infty\), and the tempering schedule satisfies
    \[
        0 \leq \lambda_R \leq \lambda_{R-1} \leq \cdots \leq \lambda_0 = 1 .
    \]

    \item \textbf{Initialization.}
    \(\lambda_R=0\), and the initial particles are sampled i.i.d. from the noisy prior:
    \[
        \bz_R^{(i)} \sim p_{s_R}(\bz_R), \qquad w_R^{(i)}=1/N .
    \]
    Equivalently, the initial weighted empirical measure is consistent for \(\eta_R\).

    \item \textbf{Exact reconstruction.}
    For each \(r=0,\dots,R\), the stagewise reconstruction kernel used by ideal TGD is the exact conditional law
    \begin{equation}
        K_r(d\bx_0\mid \bz_r)
        :=
        p_{\lambda_r,s_r}(\bx_0\mid \bz_r,\by)\,d\bx_0
        =
        \frac{
            \pi_r(\bx_0)p_{s_r}(\bz_r\mid \bx_0)
        }{
            \eta_r(\bz_r)
        }\,d\bx_0 .
        \label{eq:appB-exact-reconstruction}
    \end{equation}
    Equivalently,
    \begin{equation}
        p_{\lambda_r,s_r}(\bx_0\mid \bz_r,\by)
        \propto
        p_{s_r}(\bx_0\mid \bz_r)\ell(\bx_0)^{\lambda_r}.
        \label{eq:appB-reconstruction-density}
    \end{equation}

    \item \textbf{Bounded incremental potentials.}
    For each \(r=R,\dots,1\), the incremental potential
    \begin{equation}
        G_r(\bx_0)
        :=
        \ell(\bx_0)^{\lambda_{r-1}-\lambda_r}
        =
        p(\by\mid \bx_0)^{\lambda_{r-1}-\lambda_r}
        \label{eq:appB-incremental-potential}
    \end{equation}
    is measurable and bounded, with \(\pi_r(G_r)>0\).

    \item \textbf{Consistent resampling.} At each outer stage, either resampling is omitted or the resampling scheme preserves consistency of weighted empirical measures: whenever the weighted particle system is consistent for a target distribution, the equally weighted resampled particles are also consistent for the same target. Standard SMC schemes \cite{chopin2004introduction, smc_elements} such as multinomial, stratified, residual, and systematic resampling satisfy this condition.
\end{enumerate}

Then, for every bounded measurable test function \(h\), the final weighted empirical measure produced by ideal TGD satisfies
\begin{equation}
    \sum_{i=1}^N w_0^{(i)}h(\bx_0^{(i)})
    \xrightarrow{\mathbb{P}}
    \int h(\bx_0)\,\pi_0(d\bx_0)
    =
    \mathbb{E}_{p(\bx_0\mid \by)}[h(\bx_0)] .
    \label{eq:appB-consistency-result}
\end{equation}
In particular, the final weighted empirical measure produced by ideal TGD converges weakly in probability to the posterior \(p(\bx_0\mid \by)\) as \(N\to\infty\).
\end{theorem}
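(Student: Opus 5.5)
The argument is an induction over outer stages, run backwards from $r=R$ to $r=0$. The inductive claim is that at the start of stage $r$ the weighted auxiliary particle system $\{(\bz_r^{(i)},w_r^{(i)})\}$ is consistent for $\eta_r$. That is, $\sum_i w_r^{(i)}\varphi(\bz_r^{(i)})\to\eta_r(\varphi)$ in probability for every bounded measurable $\varphi$.

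The base case is assumption 2. Since $\lambda_R=0$, we have $\eta_R=\int p(\bx_0)p_{s_R}(\cdot\mid\bx_0)\,d\bx_0=p_{s_R}$. The weak law of large numbers for i.i.d. draws then gives consistency.

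Each inductive step splits into three sub-steps, each preserving consistency.

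\textbf{(a) Reconstruction.} Draw $\bx_0^{(i)}\sim K_r(\cdot\mid\bz_r^{(i)})$ conditionally independently given the $\bz$'s. Assumption 3 gives $\eta_r K_r=\pi_r$, by the lifted identity of Appendix~\ref{app:smc-weight-derivations}: the joint $\pi_r(\bx_0)p_{s_r}(\bz\mid\bx_0)$ has $\bz$-marginal $\eta_r$ and conditional $K_r$. The claim is $\sum_i w_r^{(i)}f(\bx_0^{(i)})\to\pi_r(f)$ for bounded $f$.
\begin{itemize}
\item Decompose the error into $\sum_i w_r^{(i)}[f(\bx_0^{(i)})-K_rf(\bz_r^{(i)})]$ plus $\sum_i w_r^{(i)}K_rf(\bz_r^{(i)})-\eta_r(K_rf)$.
\item The second term vanishes by the induction hypothesis, since $K_rf$ is bounded measurable.
\item The first term is a sum of conditionally centred, bounded variables. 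Its conditional variance is at most $\|f\|_\infty^2\sum_i (w_r^{(i)})^2$.
\end{itemize}

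\textbf{(b) Reweighting.} Multiply the weights by $G_r$ and normalise. Using $\pi_{r-1}(f)=\pi_r(G_rf)/\pi_r(G_r)$, which follows from $\gamma_{r-1}=\gamma_r G_r$, apply step (a) to the bounded functions $G_rf$ and $G_r$; boundedness of $G_r$ is assumption 4. Since $\pi_r(G_r)>0$, the continuous mapping theorem applied to the ratio gives consistency for $\pi_{r-1}$. Resampling, if performed, preserves this consistency by assumption 5.

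\textbf{(c) Re-noising.} Draw $\bz_{r-1}^{(i)}\sim p_{s_{r-1}}(\cdot\mid\bx_0^{(i)})$. This is the same argument as step (a), applied to the Markov kernel $p_{s_{r-1}}(\cdot\mid\bx_0)$, using $\pi_{r-1}p_{s_{r-1}}=\eta_{r-1}$ by definition \eqref{eq:appB-noisy-marginal}. This closes the induction.

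The final reconstruction at $r=0$ is one more application of step (a), giving consistency for $\pi_0=p(\bx_0\mid\by)$. This is \eqref{eq:appB-consistency-result}. Weak convergence in probability follows from convergence for bounded continuous $h$.

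The main obstacle is the mutation steps (a) and (c), specifically controlling $\sum_i (w^{(i)})^2\to0$ in probability. I would handle this by strengthening the induction hypothesis to also carry $N\max_i w^{(i)}=O_{\mathbb P}(1)$. This holds at initialisation, where the weights are exactly $1/N$, and after resampling, where they are reset to $1/N$. After reweighting it is preserved because $N\max_i\tilde w^{(i)}/\sum_j\tilde w^{(j)}\le \|G_r\|_\infty N\max_i w^{(i)}/\sum_j w^{(j)}G_r(\bx_0^{(j)})$, and the denominator converges to $\pi_r(G_r)>0$. With this bound, $\sum_i (w^{(i)})^2=O_{\mathbb P}(1/N)$. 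A conditional Chebyshev inequality, applied given the sigma-field of the pre-mutation particles, then drives the first term in step (a) to zero. Alternatively, one can cite the standard SMC-sampler consistency theorem of \cite{delmoral2006sequential,chopin2004introduction} once the lifted Feynman–Kac structure is verified.
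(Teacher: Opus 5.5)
Your proposal is correct and follows the same route as the paper's proof: you check the exact population flow $\eta_r \xrightarrow{K_r} \pi_r \xrightarrow{\Psi_{G_r}} \pi_{r-1} \xrightarrow{P_{r-1}} \eta_{r-1}$ and induct over outer stages starting from $\eta_R = p_{s_R}$. The difference is in the stagewise steps: the paper simply cites standard SMC results for ``mutation and reweighting preserve consistency,'' whereas you prove them directly, and your added hypothesis $N\max_i w^{(i)} = O_{\mathbb P}(1)$ supplies the negligible-maximum-weight control that those standard results build into their notion of consistency (convergence of integrals alone is not preserved under mutation).
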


\begin{proof}
We prove the result by identifying ideal TGD as a standard SMC approximation of an exact sequence of measure transformations.

\paragraph{Step 1: One ideal TGD stage has the correct population flow.}

Fix \(r\in\{R,\dots,1\}\). By the definition of the exact reconstruction kernel,
\begin{equation}
    K_r(d\bx_0\mid \bz_r)
    =
    \frac{
        \pi_r(\bx_0)p_{s_r}(\bz_r\mid \bx_0)
    }{
        \eta_r(\bz_r)
    }\,d\bx_0 .
    \label{eq:appB-kernel-bayes}
\end{equation}
Integrating this kernel against \(\eta_r\) gives
\begin{align}
    \eta_r K_r(d\bx_0)
    &:=
    \int \eta_r(d\bz_r)K_r(d\bx_0\mid \bz_r) \nonumber \\
    &=
    \int
    \eta_r(\bz_r)\,d\bz_r
    \frac{
        \pi_r(\bx_0)p_{s_r}(\bz_r\mid \bx_0)
    }{
        \eta_r(\bz_r)
    }\,d\bx_0 \nonumber \\
    &=
    \pi_r(\bx_0)
    \left(
    \int p_{s_r}(\bz_r\mid \bx_0)\,d\bz_r
    \right)
    d\bx_0
    =
    \pi_r(d\bx_0).
    \label{eq:appB-reconstruction-maps-eta-to-pi}
\end{align}
Thus exact reconstruction maps the noisy marginal \(\eta_r\) to the clean target \(\pi_r\).

Next define the Boltzmann--Gibbs reweighting operator
\begin{equation}
    \Psi_{G_r}(\mu)(d\bx_0)
    :=
    \frac{
        G_r(\bx_0)\mu(d\bx_0)
    }{
        \mu(G_r)
    } .
    \label{eq:appB-bg-operator}
\end{equation}
Applying this operator to \(\pi_r\) gives
\begin{align}
    \Psi_{G_r}(\pi_r)(d\bx_0)
    &\propto
    p(\by\mid \bx_0)^{\lambda_{r-1}-\lambda_r}
    p(\bx_0)p(\by\mid \bx_0)^{\lambda_r}
    d\bx_0 \nonumber \\
    &=
    p(\bx_0)p(\by\mid \bx_0)^{\lambda_{r-1}}
    d\bx_0
    \propto
    \pi_{r-1}(d\bx_0).
    \label{eq:appB-bg-maps-pi}
\end{align}
Therefore,
\begin{equation}
    \Psi_{G_r}(\pi_r)=\pi_{r-1}.
    \label{eq:appB-bg-exact}
\end{equation}

Finally, let
\begin{equation}
    P_{r-1}(d\bz_{r-1}\mid \bx_0)
    :=
    p_{s_{r-1}}(d\bz_{r-1}\mid \bx_0)
    \label{eq:appB-propagation-kernel}
\end{equation}
be the forward noising kernel to the next auxiliary noise level. By the definition of \(\eta_{r-1}\),
\begin{equation}
    \pi_{r-1}P_{r-1}
    =
    \eta_{r-1}.
    \label{eq:appB-propagation-maps-pi-to-eta}
\end{equation}

Combining
\eqref{eq:appB-reconstruction-maps-eta-to-pi},
\eqref{eq:appB-bg-exact}, and
\eqref{eq:appB-propagation-maps-pi-to-eta}, one ideal TGD stage implements the exact population flow
\begin{equation}
    \eta_r
    \xrightarrow{K_r}
    \pi_r
    \xrightarrow{\Psi_{G_r}}
    \pi_{r-1}
    \xrightarrow{P_{r-1}}
    \eta_{r-1}.
    \label{eq:appB-population-flow}
\end{equation}
At the terminal stage, there is no additional reweighting or propagation; exact reconstruction gives
\begin{equation}
    \eta_0
    \xrightarrow{K_0}
    \pi_0 .
    \label{eq:appB-terminal-flow}
\end{equation}

\paragraph{Step 2: The particle update consistently approximates this flow.}

Assume that the weighted empirical measure
\begin{equation}
    \widehat{\eta}_r^N
    :=
    \sum_{i=1}^N w_r^{(i)}\delta_{\bz_r^{(i)}}
    \label{eq:appB-eta-empirical}
\end{equation}
is consistent for \(\eta_r\), meaning that for every bounded measurable \(\varphi\),
\[
    \widehat{\eta}_r^N(\varphi)
    \xrightarrow{\mathbb{P}}
    \eta_r(\varphi).
\]

The ideal TGD update first samples
\[
    \bx_0^{(i)} \sim K_r(\cdot\mid \bz_r^{(i)}).
\]
By standard consistency results for sequential Monte Carlo samplers
\cite{delmoral2004feynman, doucet2001sequential, smc_elements},
mutation by a Markov kernel preserves consistency. Hence the resulting weighted empirical measure over clean particles is consistent for
\[
    \eta_rK_r=\pi_r.
\]

TGD then applies the incremental potential
\[
    G_r(\bx_0)
    =
    p(\by\mid \bx_0)^{\lambda_{r-1}-\lambda_r},
\]
using the weight update
\begin{equation}
    \widetilde w_{r-1}^{(i)}
    \propto
    w_r^{(i)}G_r(\bx_0^{(i)})
    =
    w_r^{(i)}
    p(\by\mid \bx_0^{(i)})^{\lambda_{r-1}-\lambda_r}.
    \label{eq:appB-weight-update}
\end{equation}
This is the same incremental weight derived in Appendix~\ref{app:smc-weight-derivations}. Since \(G_r\) is bounded and \(\pi_r(G_r)>0\), standard SMC consistency results imply that weighting preserves consistency, and the weighted clean empirical measure is consistent for
\[
    \Psi_{G_r}(\pi_r)=\pi_{r-1}.
\]

If resampling is performed, the assumed consistency of the resampling scheme preserves this limit. If resampling is omitted, the weighted particle system is left unchanged and consistency is immediate.

Finally, TGD samples
\[
    \bz_{r-1}^{(i)}
    \sim
    P_{r-1}(\cdot\mid \bx_0^{(i)}).
\]
A second application of mutation consistency gives a weighted empirical measure over noisy auxiliary particles that is consistent for
\[
    \pi_{r-1}P_{r-1}=\eta_{r-1}.
\]
Thus, consistency for \(\eta_r\) implies consistency for \(\eta_{r-1}\).

At the terminal stage \(r=0\), if
\[
    \widehat{\eta}_0^N
    =
    \sum_{i=1}^N w_0^{(i)}\delta_{\bz_0^{(i)}}
\]
is consistent for \(\eta_0\), then sampling
\[
    \bx_0^{(i)}\sim K_0(\cdot\mid \bz_0^{(i)})
\]
produces a weighted clean empirical measure consistent for
\[
    \eta_0K_0=\pi_0.
\]

\paragraph{Step 3: Induction over outer stages.}

Because \(\lambda_R=0\), the initial clean target is the prior:
\[
    \pi_R(d\bx_0)=p(\bx_0)d\bx_0.
\]
Therefore the corresponding noisy marginal is
\[
    \eta_R(d\bz_R)
    =
    \int p(\bx_0)p_{s_R}(d\bz_R\mid \bx_0)\,d\bx_0
    =
    p_{s_R}(d\bz_R),
\]
which is exactly the noisy prior used to initialize TGD. Hence the initial empirical measure is consistent for \(\eta_R\).

Applying Step~2 for
\[
    r=R,R-1,\dots,1
\]
shows by induction that the particle system is consistent for \(\eta_0\). Applying the terminal reconstruction kernel \(K_0\) then yields a weighted empirical measure over clean particles that is consistent for
\[
    \pi_0(d\bx_0)=p(\bx_0\mid \by)d\bx_0.
\]
Consequently, for every bounded measurable \(h\),
\[
    \sum_{i=1}^N w_0^{(i)}h(\bx_0^{(i)})
    \xrightarrow{\mathbb{P}}
    \pi_0(h)
    =
    \mathbb{E}_{p(\bx_0\mid \by)}[h(\bx_0)].
\]
This proves the claim.
\end{proof}

\section{Hybrid Conditional Reconstruction and Clean-Space Correction}
\label{app:hybrid-conditional}

Section~\ref{sec:reconstruction-modules} described two practical families of reconstruction modules for approximating the ideal stagewise law
\begin{equation}
    p_{\lambda_r,s_r}(\bx_0\mid \bz_r,\by)
    \propto
    p_{s_r}(\bx_0\mid \bz_r)\,p(\by\mid \bx_0)^{\lambda_r},
    \label{eq:hybrid-stagewise-law}
\end{equation}
where \(\bz_r\) is the auxiliary state at noise level \(s_r\). Conditional reverse-dynamics modules, such as DPS or MPGD solvers, attempt to impose the observation during the reconstruction dynamics. Clean-space correction modules, such as DAPS solvers, first reconstruct using the unconditional diffusion prior and then correct the clean sample using the observation. We show that these two strategies can be viewed as endpoints of a common hybrid construction.

For compactness, write
\[
    \ell(\bx_0) := p(\by\mid \bx_0).
\]
For any intermediate conditioning level \(\nu\in[0,\lambda_r]\), define the weaker stagewise conditional law
\begin{equation}
    p_{\nu,s_r}(\bx_0\mid \bz_r,\by)
    \propto
    p_{s_r}(\bx_0\mid \bz_r)\,\ell(\bx_0)^\nu .
    \label{eq:hybrid-weak-law}
\end{equation}
Then the full stagewise law admits the factorization
\begin{align}
    p_{\lambda_r,s_r}(\bx_0\mid \bz_r,\by)
    &\propto
    p_{s_r}(\bx_0\mid \bz_r)\,\ell(\bx_0)^{\lambda_r} \nonumber \\
    &=
    p_{s_r}(\bx_0\mid \bz_r)\,\ell(\bx_0)^\nu
    \ell(\bx_0)^{\lambda_r-\nu} \nonumber \\
    &\propto
    p_{\nu,s_r}(\bx_0\mid \bz_r,\by)\,
    \ell(\bx_0)^{\lambda_r-\nu}.
    \label{eq:hybrid-factorization}
\end{align}
Thus, one may first approximately sample from a weaker conditional law and then apply a clean-space correction that accounts for the remaining likelihood exponent.

This suggests the following hybrid reconstruction module at outer stage \(r\).

\paragraph{Conditional reverse-dynamics warm start.}
Choose an intermediate conditioning level \(\nu\in[0,\lambda_r]\). Starting from the auxiliary noisy state \(\bz_r\), run a conditional reverse-dynamics reconstruction module whose target is the weaker law
\[
    p_{\nu,s_r}(\bx_0\mid \bz_r,\by)
    \propto
    p_{s_r}(\bx_0\mid \bz_r)\,\ell(\bx_0)^\nu .
\]
Let the resulting clean candidate be denoted by \(\widetilde{\bx}_0\). In practice, this step is approximate, because conditional reverse dynamics use tractable approximations to the intractable conditional score.

\paragraph{Clean-space correction.}
Starting from \(\widetilde{\bx}_0\), apply a clean-space Markov kernel \(M_{r,\nu}\) whose invariant distribution is the full stagewise law
\[
    p_{\lambda_r,s_r}(\bx_0\mid \bz_r,\by).
\]
Examples include Metropolis--Hastings, Langevin, proximal, or other observation-correction kernels, depending on the likelihood model. By the factorization in \eqref{eq:hybrid-factorization}, this correction can be interpreted as accounting for the residual likelihood factor
\[
    \ell(\bx_0)^{\lambda_r-\nu}.
\]
More precisely, the full target is still \(p_{\lambda_r,s_r}(\bx_0\mid \bz_r,\by)\); the residual factor describes how this target differs from the weaker law \(p_{\nu,s_r}(\bx_0\mid \bz_r,\by)\). In the special case where exact samples from \(p_{\nu,s_r}\) are used as an independence proposal, the Metropolis--Hastings acceptance ratio depends only on the residual likelihood factor. For generic local correction kernels, the transition must target the full stagewise density.

The endpoints of this construction recover the two reconstruction families used in the main text. If \(\nu=0\), then
\[
    p_{0,s_r}(\bx_0\mid \bz_r,\by)
    =
    p_{s_r}(\bx_0\mid \bz_r),
\]
so the first step is unconditional reconstruction followed by clean-space observation correction, corresponding to a DAPS-style module. If \(\nu=\lambda_r\), the residual factor is constant, and the clean-space correction step is unnecessary when the conditional reverse-dynamics reconstruction is exact. This recovers a fully conditional reverse-dynamics module. Intermediate values \(0<\nu<\lambda_r\) yield hybrid modules in which conditional reverse dynamics absorb part of the observation likelihood and the clean-space correction accounts for the remaining conditioning.

This hybrid construction is useful conceptually because it shows that conditional reverse dynamics and clean-space correction are compatible within the same TGD reconstruction interface. It also separates two sources of approximation: the accuracy of the weak conditional reconstruction and the mixing or accuracy of the clean-space correction kernel. In our preliminary experiments, however, hybrid choices with \(0<\nu<\lambda_r\) did not yield consistent improvements over the simpler endpoint modules used in the main experiments. We therefore report the endpoint variants in the main paper and include the hybrid construction here for completeness.

\section{Recovering Existing Training-Free Conditional Samplers}
\label{app:recovering-existing-training-free-conditional-samplers}

\begin{figure}
    \centering
    \includegraphics[width=\linewidth]{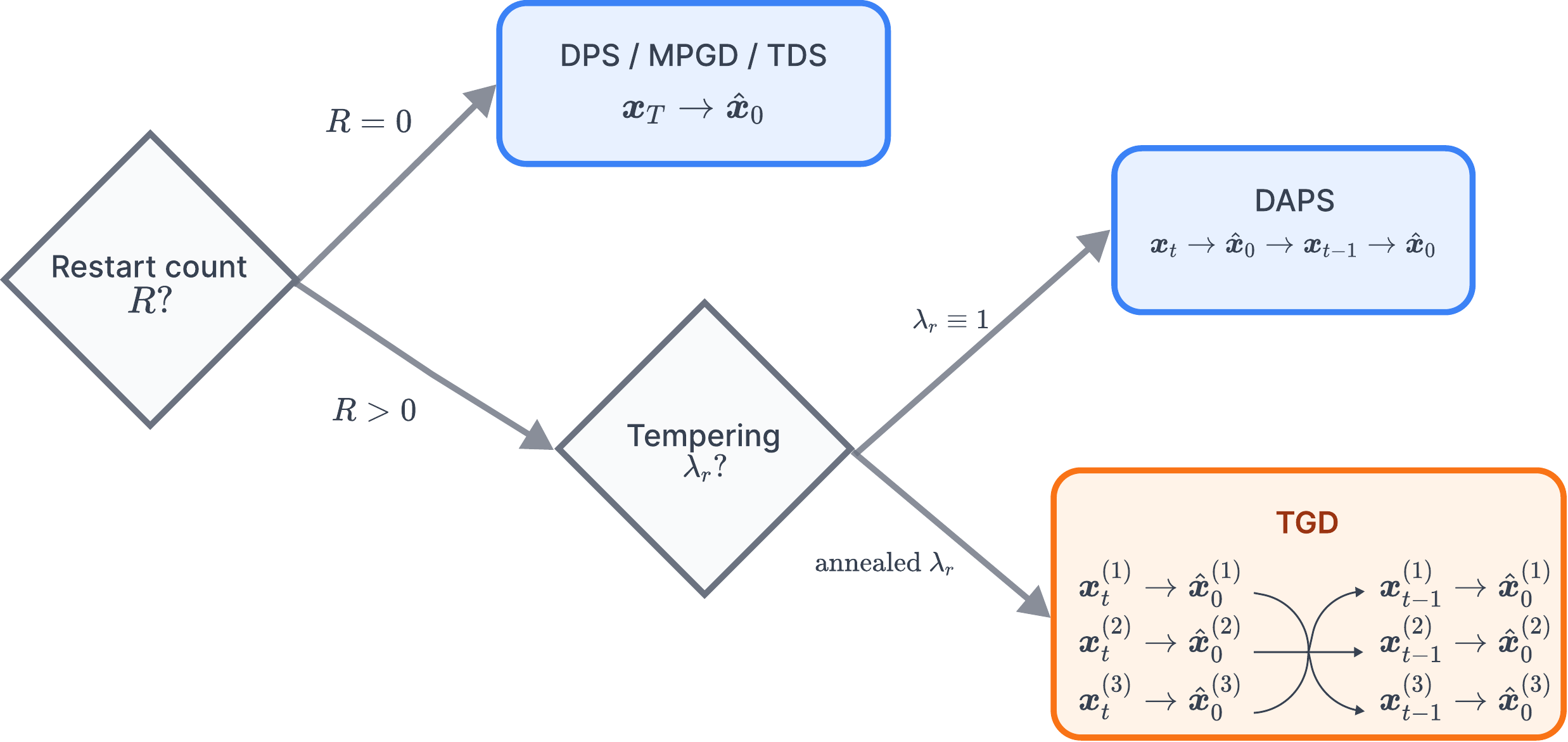}
    \caption{Settings under which existing training-free conditional samplers are recovered as special cases of TGD. Single-stage TGD recovers the chosen conditional reconstruction module, while fully conditioned multi-stage TGD with constant tempering recovers DAPS-style repeated reconstruction and correction.}
    \label{fig:recovery}
\end{figure}

We describe how several existing training-free conditional diffusion samplers are recovered as special cases of TGD. In the notation of Sections~\ref{sec:annealed-clean-targets}--\ref{sec:reconstruction-modules}, TGD is specified by an outer noise schedule \(\{s_r\}_{r=0}^R\), a tempering schedule \(\{\lambda_r\}_{r=0}^R\), a number of outer particles \(N\), and a stagewise reconstruction module approximating
\begin{equation}
    p_{\lambda_r,s_r}(\bx_0\mid \bz_r,\by)
    \propto
    p_{s_r}(\bx_0\mid \bz_r)\,p(\by\mid \bx_0)^{\lambda_r}.
    \label{eq:tgd-special-cases-reconstruction-law}
\end{equation}
The corresponding clean-space annealed targets are
\begin{equation}
    \pi_r(\bx_0)
    \propto
    p(\bx_0)\,p(\by\mid \bx_0)^{\lambda_r},
    \qquad
    0\leq \lambda_R\leq \cdots \leq \lambda_0=1,
    \label{eq:tgd-special-cases-targets}
\end{equation}
and the outer incremental weight update is
\begin{equation}
    \widetilde w_{r-1}^{(i)}
    =
    w_r^{(i)}
    p(\by\mid \bx_0^{(i)})^{\lambda_{r-1}-\lambda_r},
    \qquad
    r=R,\dots,1.
    \label{eq:tgd-special-cases-weights}
\end{equation}
Here \(R\) denotes the number of outer transitions, while \(r=0\) denotes the terminal fully conditioned reconstruction stage.

\paragraph{Single-stage conditional samplers.}
When \(R=0\), TGD has no outer reweighting, resampling, or propagation stages. The tempering schedule consists only of the terminal level \(\lambda_0=1\), and the algorithm initializes
\[
    \bz_0\sim p_{s_0}.
\]
With the standard choice \(s_0=S\), this gives a single fully conditioned reconstruction from the highest-noise prior state:
\begin{equation}
    \bx_0\sim p_{\lambda_0,s_0}(\bx_0\mid \bz_0,\by)
    =
    p_{1,S}(\bx_0\mid \bz_0,\by).
    \label{eq:single-stage-reconstruction}
\end{equation}
Therefore, TGD with \(R=0\) exactly recovers any chosen single-call training-free conditional sampler when that sampler is used as the reconstruction module. Choosing a DPS-style reconstruction module recovers DPS; choosing an MPGD-style reconstruction module recovers MPGD; choosing a TDS reconstruction module recovers TDS. In the TDS case, the particles and weights internal to TDS are part of the reconstruction module itself and are distinct from the outer TGD particle system.

\paragraph{DAPS-style multi-stage reconstruction.}
DAPS-style methods use repeated reconstruction and observation correction across a decreasing sequence of noise levels. This behavior is recovered by choosing a multi-stage noise schedule
\begin{equation}
    S=s_R\geq s_{R-1}\geq \cdots \geq s_0>0,
    \qquad R\geq 1,
    \label{eq:daps-noise-schedule}
\end{equation}
and setting the tempering schedule to be fully conditioned at every stage:
\begin{equation}
    \lambda_r\equiv 1,
    \qquad r=0,\dots,R.
    \label{eq:daps-tempering-schedule}
\end{equation}
Then, for every nonterminal stage \(r=R,\dots,1\),
\[
    \lambda_{r-1}-\lambda_r = 0,
\]
so the incremental potential in \eqref{eq:tgd-special-cases-weights} is constant. Consequently, the outer weighting step is vacuous. If resampling is omitted, and the reconstruction module at each stage is chosen to be the DAPS-style module consisting of unconditional reconstruction followed by clean-space observation correction, then TGD exactly recovers DAPS along the same noise schedule \(\{s_r\}_{r=0}^R\).

For \(N=1\), this gives a single DAPS trajectory. For \(N>1\), omitting resampling gives independent multi-trajectory DAPS sampling. This corresponds to best-of-\(N\) baselines, which run several independent trajectories and select the final reconstruction using a data-consistency score. In contrast, TGD with a nonconstant tempering schedule introduces nontrivial likelihood-ratio weights and optional resampling, allowing outer particles to interact before the terminal reconstruction.

These reductions clarify the role of TGD. When the outer structure collapses to a single stage, TGD reduces to the chosen conditional reconstruction solver. When the tempering schedule is constant and fully conditioned, the likelihood-ratio weights are constant and the method reduces to repeated reconstruction and correction without outer particle interaction. The distinctive TGD regime is the annealed multi-stage setting: outer particles are reconstructed under progressively stronger conditioning, reweighted by likelihood-ratio increments, optionally resampled, and propagated across noise levels. This is the regime that allows computation to concentrate on trajectories that remain plausible under both the diffusion prior and the observation.

\section{Accelerated TGD pseudocode}
\label{app:atgd_pseudocode}

For completeness, Algorithm~\ref{alg:atgd} gives the A-TGD procedure used in our experiments.

\begin{algorithm}[t]
\caption{Accelerated Tempered Guided Diffusion (A-TGD)}
\label{alg:atgd}
\begin{algorithmic}[1]
\Require observation $\by$,  number of particles $N$, tempering schedule $\{\lambda_r\}_{r=0}^R$, noise schedule $\{s_r\}_{r=0}^R$, conditional reconstruction law $p_{\lambda_r,s_r}(\bx_0 \mid \bz_r,\by)$, pruning factor $\rho \in [0,1]$
\State $K_\rho \gets \min\{R,\max\{1,\lceil \rho R\rceil\}\}$
\State Sample $\bz_R^{(i)} \sim p_{s_R}(\bz_R)$ and set $w_R^{(i)} \gets 1/N$, for $i=1,\dots,N$

\For{$r = R, R-1, \dots, R-K_\rho+1$}
    \For{$i = 1,\dots,N$}
        \State Sample $\bx_0^{(i)} \sim p_{\lambda_r,s_r}(\bx_0 \mid \bz_r^{(i)}, \by)$
        \State $\tilde w_{r-1}^{(i)} \gets w_r^{(i)} p(\by \mid \bx_0^{(i)})^{\lambda_{r-1}-\lambda_r}$
    \EndFor
    \State Normalize $\{\tilde w_{r-1}^{(i)}\}_{i=1}^N$ to obtain $\{w_{r-1}^{(i)}\}_{i=1}^N$
    \State Optionally resample according to $\{w_{r-1}^{(i)}\}_{i=1}^N$ and reset weights to $1/N$
    \For{$i = 1,\dots,N$}
        \State Sample $\bz_{r-1}^{(i)} \sim p_{s_{r-1}}(\bz_{r-1}\mid \bx_0^{(i)})$
    \EndFor
\EndFor

\Statex \textit{// Prune current particle population}
\For{$i = 1,\dots,N$}
    \State Sample $\bar{\bx}_0^{(i)} \sim
    p_{\lambda_{R-K_\rho},s_{R-K_\rho}}
    (\bx_0 \mid \bz_{R-K_\rho}^{(i)},\by)$
\EndFor
\State $i^\star \gets
\arg\max_i p(\by \mid \bar{\bx}_0^{(i)})$
\State Set $\bz_{R-K_\rho} \gets \bz_{R-K_\rho}^{(i^\star)}$

\For{$r = R-K_\rho, R-K_\rho-1, \dots, 1$}
    \State Sample $\bx_0 \sim p_{\lambda_r,s_r}(\bx_0 \mid \bz_r,\by)$
    \State Sample $\bz_{r-1} \sim p_{s_{r-1}}(\bz_{r-1}\mid \bx_0)$
\EndFor

\State Sample $\bx_0 \sim p_{\lambda_0,s_0}(\bx_0 \mid \bz_0,\by)$
\State \Return $\bx_0$
\end{algorithmic}
\end{algorithm}

\section{Experimental and Implementation Details}
\label{app:details}

\subsection{Controlled two-dimensional experiment}
\label{app:2d_details}

We use a controlled two-dimensional inverse problem in which the prior, score, likelihood, and
posterior reference sampler are all available analytically. This lets us evaluate the outer particle
procedure without learned-score error.

\paragraph{Prior and analytic score.}
The clean prior is an equally weighted mixture of \(5\) Gaussians in \(\mathbb R^2\). The component
means are sampled uniformly from \([-0.9,0.9]^2\), corresponding to a margin parameter \(0.10\),
and each component has covariance \(\tau^2 I\) with \(\tau=0.005\). We use an analytic
diffusion model: under additive Gaussian noising at noise level \(s\), the noised marginal remains
a Gaussian mixture with component covariances \(\tau^2 I+s^2 I\), and the score is computed
exactly from this noised mixture distribution.

\paragraph{Observation model and posterior reference samples.}
For each of \(10\) test conditions, we sample a clean point \(\bx_0\) from the mixture prior and
generate an elementwise absolute-value observation
\[
    \by = |\bx_0| + \boldsymbol{\epsilon},
    \qquad
    \boldsymbol{\epsilon}\sim \mathcal N(0,0.01^2 I).
\]
The same observations are used for all methods. For each observation, we generate
\(10{,}000\) reference posterior samples from the exact conditional distribution
\(p(\bx_0\mid \by)\). This sampler exploits the diagonal Gaussian-mixture prior and the separable
absolute-value likelihood: it first samples the mixture component conditional on \(\by\), then samples
each coordinate from the corresponding one-dimensional conditional distribution, which is a mixture
over the two possible signs.

\paragraph{Samplers.}
All methods use the same analytic prior score and the same DPS-style conditional drift inside their
reconstruction solver. The proposal likelihood used for DPS-style guidance has standard deviation
\[
    \sigma_{\mathrm{prop}}(s) = 0.8s + 0.01,
\]
while TGD particle weights use the true measurement likelihood with noise standard deviation
\(0.01\).

TGD uses a \(20\)-point EDM outer noise grid with highest noise \(s_R=80\), lowest positive
noise \(s_0=0.002\), and EDM curvature parameter \(7\). Its likelihood tempering schedule is
uniform from \(\lambda_R=0\) to \(\lambda_0=1\). Each outer stage uses a one-step Euler-discretized DPS-style
ODE reconstruction proposal, followed by likelihood weighting, systematic resampling
after nontrivial weighting steps, and EDM re-noising.

We compare against two baselines implemented in the same sampler framework. DPS uses a single
outer stage and a \(20\)-step Euler ODE DPS-style reconstruction trajectory, with no outer particle
interaction. DPS-DAPS uses the same \(20\)-point outer EDM noise grid as TGD but sets
\(\lambda_r=1\) at every stage. Since the tempering exponent is constant, the
incremental likelihood weights are constant, so no resampling is triggered. This
baseline corresponds to repeated DPS/DAPS-style reconstruction without annealed
SMC weighting or particle interaction.

\paragraph{Particle sweep and metrics.}
We sweep the logical particle count
\[
    N\in\{1,2,4,8,16,32,64,128\}.
\]
For each method and particle count, we run the corresponding \(N\)-particle sampler repeatedly and
pool \(10{,}000\) generated samples per observation before computing distributional metrics. For
computational efficiency, the \(N=1\) pooled-sample setting is generated in larger independent
batches, but it is evaluated as the logical single-particle setting.

We report the max-sliced Wasserstein distance, labeled as SWD in
Figure~\ref{fig:toy2d_results}, between generated samples and exact posterior reference samples.
The metric is computed with the POT implementation~\citep{flamary2021pot,flamary2024pot} per
observation using \(100\) random projections, and then summarized over the \(10\) test conditions.
Figure~\ref{fig:toy2d_results} reports means over the \(10\) test conditions, with bands
corresponding to one standard error.

\paragraph{Visualization.}
For Figure~\ref{fig:toy2d_results}(a), we visualize one representative test condition. Prior
contours are estimated from \(20{,}000\) prior samples, and posterior contours are estimated from
exact posterior reference samples for the same observation. We overlay the final \(N=64\) TGD
particles, the ground-truth clean point, and the sign-ambiguous candidate locations induced by the
absolute-value observation.

\paragraph{Compute.}
The controlled two-dimensional experiments were run on CPU and completed in a few minutes.

\subsection{Noise and ODE discretization}

We use the EDM noise discretization~\citep{karras2022elucidatingdesignspacediffusionbased},
following DAPS~\citep{Zhang_2025}. Our implementation uses two discretizations. The first is
the outer TGD discretization, which defines the noise levels \(s_R,\ldots,s_0\) at which particles
are reconstructed, likelihood-reweighted, optionally resampled, and re-noised. For all image
experiments, we use the same outer EDM discretization as DAPS, with maximum noise
\(s_{R}=100\), minimum positive noise \(s_{0}=0.1\), and curvature parameter \( 7\).

The second is the inner ODE discretization used inside each reconstruction module. We denote the inner ODE variable by \(\tau\) to distinguish it from the outer TGD levels \(s_r\). All image experiments use four Euler ODE steps per reconstruction module, with minimum positive inner noise \(\tau_{\min}=0.01\) and curvature parameter \( 7 \). 

\subsection{Inverse problem setup}

We use the same inverse-problem definitions and preprocessing as DAPS~\citep{Zhang_2025},
which follows the DPS benchmark setup~\citep{chung2024diffusionposteriorsamplinggeneral}. In all image experiments,
the observation noise standard deviation is \(\sigma=0.05\).

For inpainting, the forward operator \(\mathcal A\) applies the prescribed pixel mask to the clean
image, and the observation is generated as
\[
\by = \mathcal A(\bx_0) + \bm{\epsilon}, \qquad \bm{\epsilon} \sim \mathcal N(\bm{0},\sigma^2 \boldsymbol{I}).
\]
We use the same mask-generation protocol as DAPS and fix the mask realization across methods
for fair comparison.

For phase retrieval, we use the normalized observation model from DAPS:
\[
\by \sim
\mathcal N\!\left(
\left|\bF \bP \left(0.5\bx_0+0.5\right)\right|,
\sigma^2 \boldsymbol{I}
\right),
\]
where \(\bF\) is the discrete Fourier transform and \(\bP\) is the oversampling operator. Following
DAPS, we use oversampling factor \(k=2\) and \(n=8\). The affine transformation
\(0.5\bx_0+0.5\) maps images from \([-1,1]\) to \([0,1]\) before applying the phase-retrieval
operator.

\subsection{Pretrained diffusion models}
\label{app:pretrained_models}

We use the same pretrained diffusion models as DAPS~\citep{Zhang_2025}. For FFHQ, we use the
\(256\times256\) model used in DPS~\citep{chung2024diffusionposteriorsamplinggeneral}.
For ImageNet, we use the \(256\times256\)  model from
\citep{dhariwal2021diffusionmodelsbeatgans}. All image inverse-problem generation and evaluation
protocols follow DAPS~\citep{Zhang_2025}, except that DAPS reports oracle best-of-\(N\) scores using the trajectory with the best LPIPS, PSNR, or SSIM against the ground truth, whereas we select the trajectory with the lowest measurement error. Since the ground-truth image is unavailable in practical inverse problems, oracle LPIPS/PSNR/SSIM selection is not possible at test time, while measurement-error selection depends only on the observation.

\subsection{Existing assets and licenses}
\label{app:asset-licenses}

We use existing datasets, pretrained models, baseline implementations, and evaluation libraries only for research evaluation. FFHQ~\citep{karras2019style} is used under the Creative Commons BY-NC-SA 4.0 license. ImageNet~\citep{deng2009imagenet} is used under the ImageNet terms of access for non-commercial research and educational use. We do not redistribute FFHQ, ImageNet, or pretrained diffusion checkpoints in the supplemental material; the reproduction instructions direct users to obtain these assets from their original sources and follow the corresponding terms of use.

Our implementation also depends on standard open-source scientific-computing and evaluation packages, including PyTorch, LPIPS~\citep{zhang2018unreasonable}, and POT~\citep{flamary2021pot,flamary2024pot}. The supplemental code includes only materials that we are permitted to redistribute, and all third-party assets are credited through the corresponding citations and original access instructions.

\subsection{Reconstruction modules}

We instantiate TGD with DPS-, MPGD-, and DAPS-style reconstruction modules. Each module maps
a noisy auxiliary state \(\bx_s\) at noise level \(s\) to an approximate clean reconstruction, which is
then passed to the outer TGD likelihood-weighting, resampling, pruning, and re-noising steps. For the
DPS- and MPGD-style modules, we use the conditional-score decomposition in~\eqref{eq:conditional-score} and approximate the intractable conditional term using the
denoised estimate \(\hat{\bx}_0(\bx_s,s)\).

For DPS, we use the standard plug-in approximation
\[
\nabla_{\bx_s}\log p_s(\by\mid \bx_s)
\approx
\nabla_{\bx_s}
\log \hat p_s\!\left(\by\mid \hat{\bx}_0(\bx_s,s)\right).
\]
For MPGD, we instead use a clean-space guidance term
\[
\nabla_{\bx_s}\log p_s(\by\mid \bx_s)
\approx
\nabla_{\bx_0}
\log \hat p_s\!\left(\by\mid \hat{\bx}_0(\bx_s,s)\right),
\]
which is inserted into the MPGD reconstruction update. In both cases, the approximate proposal
likelihood used inside the reconstruction module is
\[
\hat p_s(\by\mid \bx)
=
\mathcal N\!\left(
\by;\mathcal A(\bx),(\gamma s+\sigma)^2 \boldsymbol{I}
\right),
\]
where \(\mathcal A\) is the task forward operator, \(\sigma\) is the observation noise, and
\(\gamma\) is a proposal-smoothing parameter that inflates the likelihood variance at high diffusion
noise levels. This smoothed likelihood is used inside the reconstruction proposal. The outer TGD
particle weights use the true measurement likelihood \(p(\by\mid \bx_0)\). For pruning, we select
the particle with smallest measurement-space error,
\[
i^\star
=
\arg\min_i \|\mathcal A(\hat{\bx}_0^{(i)})-\by\|_2^2,
\]
which is equivalent to maximum likelihood selection under a Gaussian observation model with fixed
variance. We use \(\sigma=0.05\) in all image experiments, with \(\gamma=0.7\) for inpainting and
\(\gamma=0.4\) for phase retrieval.

For DAPS-style reconstruction, we follow the DAPS implementation~\citep{Zhang_2025}. In
particular, DAPS uses 100 Langevin correction steps with a linearly decayed step size and the
correction-noise setting reported in that work. This DAPS-style module is used for the phase
retrieval A-TGD experiments and for the DAPS baselines.

\subsection{Annealing schedules}
\label{app:ann_schedules}

\begin{figure}[t]
    \centering
    \includegraphics[width=0.78\linewidth]{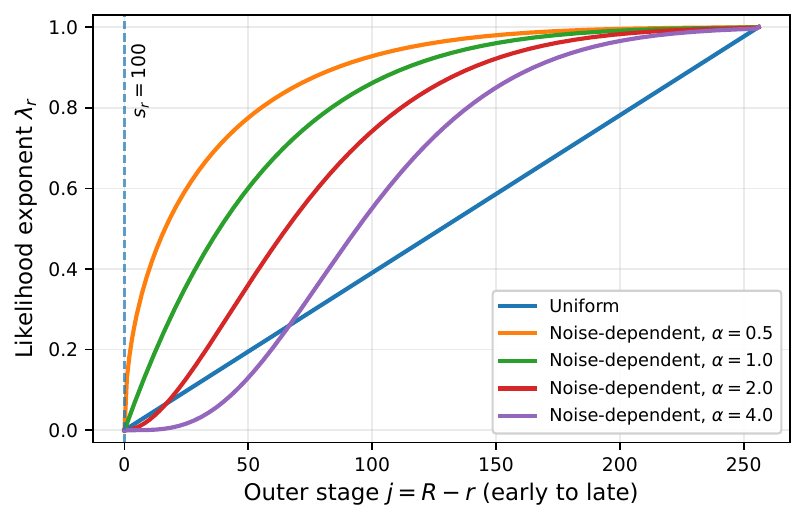}
    \caption{
    Likelihood-tempering schedules for the EDM discretization with 256 steps. The uniform schedule increases the exponent
    linearly across outer stages, while the noise-dependent schedules tie the exponent to the
    EDM noise level \(s_r\).
    }
    \label{fig:annealing-schedules}
\end{figure}

TGD uses a sequence of likelihood-tempering exponents
\[
\lambda_R,\lambda_{R-1},\ldots,\lambda_0
\]
to move from weakly conditioned targets to the final posterior. Stage \(r=R\) is the earliest,
highest-noise stage, while stage \(r=0\) is the final, lowest-noise stage. We require
\[
0 \leq \lambda_R \leq \lambda_{R-1} \leq \cdots \leq \lambda_0 = 1,
\]
so the final target corresponds to the full posterior. 

\paragraph{Uniform schedule.}
The uniform schedule increases the likelihood exponent linearly across outer stages:
\[
\lambda_r
=
\lambda_R
+
\frac{R-r}{R}
\left(1-\lambda_R\right),
\qquad r=0,\ldots,R .
\]
Thus the initial stage uses exponent \(\lambda_R\), while the final stage uses
\(\lambda_0=1\). Setting \(\lambda_R=1\) gives the no-annealing variant, in which
the full likelihood is applied at every stage.

\paragraph{Noise-dependent schedule.}
We also consider a schedule that ties the likelihood exponent directly to the diffusion noise level.
Let \(s_r\) denote the noise level at stage \(r\), ordered as
\[
s_R \geq s_{R-1} \geq \cdots \geq s_0 .
\]
For a curvature parameter \(\alpha>0\), we define the normalized noise progress
\[
u_r
=
\left(
\frac{s_R-s_r}{s_R-0}
\right)^{\alpha},
\]
and set the likelihood exponent to
\[
\lambda_r
=
\lambda_R
+
(1-\lambda_R)
\left(
\frac{s_R-s_r}{s_R-0}
\right)^{\alpha}.
\]
This gives \(u_R=0\) and \(u_0=1\), so the schedule starts from the initial likelihood exponent
\(\lambda_R\) at the highest noise level and reaches the full likelihood exponent \(1\) at the
lowest noise level. The parameter \(\alpha\) controls the curvature of the transition:
\(\alpha=1\) gives a linear transition in the noise level, while larger values keep
\(\lambda_r\) closer to \(\lambda_R\) until lower noise levels. In our experiments, we use
\(s_R=100\). Figure~\ref{fig:annealing-schedules} visualizes the uniform schedule
and several noise-dependent schedules for the EDM discretization with 256 steps.

\subsection{Hyperparameter selection}
\label{app:hyperparameter_selection}

All image-experiment hyperparameters were selected using a small tuning set of five images that
was disjoint from both the diffusion-model training data and the 100-image validation set used for
reporting results. After selecting the hyperparameters on this tuning set, we fixed them for all
reported evaluations. The reported PSNR, SSIM, LPIPS, and runtime values were computed only on
the held-out validation images and were not used for hyperparameter selection.

\subsection{Image experiment configurations}
\label{app:image_configs}

Table~\ref{tab:image-atgd-configs} summarizes the A-TGD configurations used in the main image
experiments. All main image experiments use \(N=4\) initial particles. For inpainting, we use an MPGD-style reconstruction module, uniform likelihood tempering, and systematic resampling after every likelihood-weighting step. For phase retrieval, we use a
DAPS-style reconstruction module and no likelihood tempering, i.e. \(\lambda_r=1\) for all stages;
therefore the incremental weights are constant and no resampling is performed. The pruning factor
is \(\rho=0.5\) for all image experiments except ImageNet phase retrieval, where we use
\(\rho=0.8\).

\begin{table}[t]
\centering
\small
\setlength{\tabcolsep}{4pt}
\caption{
Main A-TGD configurations for image inverse problems.
All settings use \(N=4\) particles. When resampling is enabled, we use systematic resampling.
}
\label{tab:image-atgd-configs}
\begin{tabular}{lllccc}
\toprule
Task & Dataset & Module & \(\rho\) & Annealing & Resampling \\
\midrule
Inpainting & FFHQ     & MPGD & 0.5 & Uniform & Always \\
Inpainting & ImageNet & MPGD & 0.5 & Uniform & Always \\
Phase retrieval & FFHQ     & DAPS & 0.5 & None, \(\lambda_r=1\) & Never \\
Phase retrieval & ImageNet & DAPS & 0.8 & None, \(\lambda_r=1\) & Never \\
\bottomrule
\end{tabular}
\end{table}

When resampling is enabled, we use systematic resampling and reset the particle weights to
\(1/N\). Our implementation also supports effective-sample-size- (ESS) threshold resampling, but we do
not use it in the main experiments. In preliminary experiments, ESS-threshold resampling produced
similar behavior to the simpler always-resample or never-resample policies, while introducing an
additional task-dependent hyperparameter. We therefore use the simpler policies in
Table~\ref{tab:image-atgd-configs} to keep the matched-runtime comparison transparent.

\subsection{Matched-runtime image configurations}

Table~\ref{tab:main-image-configs} reports the sampling configurations used for the matched-runtime
image experiments. For ODE-based methods, the NFE scheme is written as the number of
reconstruction calls times the four ODE steps used per reconstruction. For DPS, which does not use
the restart-based reconstruction interface, we write \(0\times K\), where \(K\) is the total number
of function evaluations. For DAPS \((N=4)\), the NFE scheme is reported per trajectory; the
reported wall-clock time includes all four trajectories. Wall-clock time per image is measured by
dividing the total sampling time for the 100-image evaluation set by 100.

\begin{table}[t]
\centering
\small
\caption{Matched-runtime image sampling configurations.}
\label{tab:main-image-configs}
\begin{tabular}{lllcc}
\toprule
Task & Dataset & Method & NFE scheme & Time/img (s) \\
\midrule

\multirow{8}{*}{Inpainting}
& \multirow{4}{*}{FFHQ}
& DPS & \(0\times596\) & 6.34 \\
& & DAPS \((N=1)\) & \(299\times4\) & 6.44 \\
& & DAPS \((N=4)\) & \(77\times4\) & 6.50 \\
& & A-TGD & \(128\times4\) & 6.47 \\
\cmidrule(lr){2-5}
& \multirow{4}{*}{ImageNet}
& DPS & \(0\times596\) & 23.63 \\
& & DAPS \((N=1)\) & \(320\times4\) & 23.52 \\
& & DAPS \((N=4)\) & \(80\times4\) & 23.84 \\
& & A-TGD & \(128\times4\) & 23.64 \\

\midrule

\multirow{8}{*}{Phase retrieval}
& \multirow{4}{*}{FFHQ}
& DPS & \(0\times404\) & 4.34 \\
& & DAPS \((N=1)\) & \(159\times4\) & 4.30 \\
& & DAPS \((N=4)\) & \(41\times4\) & 4.35 \\
& & A-TGD & \(64\times4\) & 4.35 \\
\cmidrule(lr){2-5}
& \multirow{4}{*}{ImageNet}
& DPS & \(0\times7860\) & 346 \\
& & DAPS \((N=1)\) & \(3855\times4\) & 357 \\
& & DAPS \((N=4)\) & \(1070\times4\) & 352 \\
& & A-TGD & \(1250\times4\) & 345 \\

\bottomrule
\end{tabular}
\end{table}

\subsection{Speed-quality tradeoff configuration}

Figure~\ref{fig:speed-quality} evaluates the speed-quality tradeoff on FFHQ phase retrieval.
For each method, we sweep the sampling budget and record reconstruction quality on 100 validation
images. Wall-clock time per image is measured by dividing the total sampling time for the 100-image
evaluation set by 100, including all independent trajectories or particle computations.

Table~\ref{tab:speed-quality-configs} reports the configurations used to generate the curves.
For ODE-based methods, the NFE scheme is reported as the number of reconstruction calls times
the four ODE steps used per reconstruction. For DPS, which does not use the restart-based
reconstruction interface, we write \(0\times K\) to indicate no reconstruction restarts and \(K\)
total function evaluations.

For Figure~\ref{fig:speed-quality}b, we compare A-TGD and DAPS \((N=4)\) at fixed LPIPS
thresholds
\[
\ell \in \{0.45,0.40,0.35,0.30,0.25,0.20\}.
\]
For each threshold, we estimate the wall-clock time required by each method to reach that LPIPS
value by linearly interpolating its LPIPS--runtime curve from Table~\ref{tab:speed-quality-configs}.
The speedup is
\[
\mathrm{speedup}(\ell)
=
\frac{
T_{\mathrm{DAPS}(N=4)}(\ell)
}{
T_{\mathrm{A\text{-}TGD}}(\ell)
},
\]
where \(T_m(\ell)\) is the interpolated time per image required by method \(m\) to reach LPIPS
threshold \(\ell\). Values greater than one indicate that A-TGD reaches the same LPIPS threshold
faster than independent best-of-four DAPS sampling.

\begin{table}[t]
\centering
\caption{FFHQ phase retrieval speed-quality sweep.}
\label{tab:speed-quality-configs}
\begin{tabular}{lcccc}
\toprule
Method & NFE scheme & Time/img (s) $\downarrow$ & LPIPS $\downarrow$ & PSNR $\uparrow$ \\
\midrule

\multirow{6}{*}{DPS}
& \(0\times68\)   & 0.73 & \ms{0.561}{0.063} & \ms{12.67}{2.42} \\
& \(0\times104\)  & 1.12 & \ms{0.562}{0.063} & \ms{12.67}{2.42} \\
& \(0\times204\)  & 2.18 & \ms{0.563}{0.063} & \ms{12.66}{2.42} \\
& \(0\times404\)  & 4.31 & \ms{0.564}{0.063} & \ms{12.65}{2.42} \\
& \(0\times800\)  & 8.53 & \ms{0.564}{0.063} & \ms{12.65}{2.42} \\
& \(0\times1592\) & 16.97 & \ms{0.564}{0.063} & \ms{12.65}{2.42} \\

\midrule

\multirow{6}{*}{DAPS \((N=1)\)}
& \(26\times4\)  & 0.70 & \ms{0.427}{0.191} & \ms{19.18}{7.40} \\
& \(42\times4\)  & 1.13 & \ms{0.354}{0.223} & \ms{21.95}{8.54} \\
& \(81\times4\)  & 2.18 & \ms{0.245}{0.173} & \ms{25.67}{8.19} \\
& \(159\times4\) & 4.29 & \ms{0.225}{0.185} & \ms{26.98}{8.35} \\
& \(315\times4\) & 8.51 & \ms{0.208}{0.175} & \ms{28.10}{8.08} \\
& \(628\times4\) & 16.94 & \ms{0.225}{0.191} & \ms{27.52}{8.68} \\

\midrule

\multirow{6}{*}{DAPS \((N=4)\)}
& \(7\times4\)   & 0.74 & \ms{0.561}{0.099} & \ms{16.83}{4.08} \\
& \(11\times4\)  & 1.16 & \ms{0.478}{0.128} & \ms{18.89}{4.73} \\
& \(21\times4\)  & 2.21 & \ms{0.359}{0.160} & \ms{22.42}{6.10} \\
& \(41\times4\)  & 4.31 & \ms{0.207}{0.111} & \ms{27.81}{5.12} \\
& \(81\times4\)  & 8.51 & \ms{0.169}{0.116} & \ms{29.56}{5.31} \\
& \(162\times4\) & 17.03 & \ms{0.135}{0.033} & \ms{31.40}{2.38} \\

\midrule

\multirow{6}{*}{A-TGD}
& \(9\times4\)   & 0.71 & \ms{0.508}{0.114} & \ms{18.13}{4.53} \\
& \(16\times4\)  & 1.14 & \ms{0.388}{0.143} & \ms{21.59}{5.38} \\
& \(32\times4\)  & 2.19 & \ms{0.259}{0.148} & \ms{25.62}{6.11} \\
& \(64\times4\)  & 4.30 & \ms{0.172}{0.106} & \ms{29.43}{4.07} \\
& \(128\times4\) & 8.53 & \ms{0.153}{0.089} & \ms{30.24}{4.49} \\
& \(256\times4\) & 16.97 & \ms{0.152}{0.091} & \ms{30.63}{4.60} \\

\bottomrule
\end{tabular}
\end{table}

\section{Ablations}
\label{app:ablations}

Unless otherwise stated, all ablations are run on FFHQ using 100 validation images and report mean \(\pm\) standard deviation for PSNR, SSIM, LPIPS, and wall-clock time per image in seconds. We use \(N=4\), \(\rho=0.5\), $64$ restarts and 4 ODE sampling steps with the MPGD as the default reconstruction module. For inpainting, the default SMC configuration uses the uniform annealing with resampling; for phase retrieval, it uses no annealing. These defaults are used only for the ablations in this section; the main experiments use the task-specific configurations described in Appendix~\ref{app:details}.

\paragraph{Number of particles.}
Table~\ref{tab:particle-ablation} varies the number of initial A-TGD particles \(N\). For FFHQ inpainting, increasing \(N\) has only a small effect on reconstruction quality: performance saturates around \(N=4\), and larger particle counts mainly increase wall-clock time. For FFHQ phase retrieval, the effect of \(N\) is much stronger. Increasing \(N\) from 1 to 4 substantially improves PSNR, SSIM, and LPIPS, and larger values continue to improve quality, although with diminishing returns and increased runtime.

These results indicate that the value of particle exploration depends on the ambiguity of the inverse problem. Inpainting is comparatively stable, so a small particle population is sufficient. Phase retrieval exhibits higher trajectory variability, and therefore benefits more from maintaining multiple early particles before pruning.

\begin{table}[t]
  \caption{Particle-count ablation for A-TGD on FFHQ.}
  \label{tab:particle-ablation}
  \centering
  \small
  \begin{tabular}{llcccc}
    \toprule
    Task & Particles & Time/img (s) $\downarrow$ & PSNR $\uparrow$ & SSIM $\uparrow$ & LPIPS $\downarrow$ \\
    \midrule

    \multirow{5}{*}{Inpainting}
      & $N=1$  & 1.29 & \ms{24.61}{2.70} & \ms{0.846}{0.026} & \ms{0.126}{0.028} \\
      & $N=2$  & 1.92 & \ms{24.73}{2.34} & \ms{0.848}{0.024} & \ms{0.125}{0.028} \\
      & $N=4$  & 3.26 & \bestms{25.01}{2.34} & \bestms{0.849}{0.025} & \ms{0.124}{0.028} \\
      & $N=8$  & 5.85 & \ms{24.88}{2.28} & \ms{0.848}{0.025} & \ms{0.124}{0.028} \\
      & $N=16$ & 11.14 & \ms{24.92}{2.45} & \ms{0.849}{0.025} & \bestms{0.124}{0.028} \\

    \midrule

    \multirow{5}{*}{Phase Retrieval}
      & $N=1$  & 1.30 & \ms{22.26}{7.61} & \ms{0.678}{0.213} & \ms{0.303}{0.194} \\
      & $N=2$  & 1.94 & \ms{24.76}{7.02} & \ms{0.734}{0.192} & \ms{0.248}{0.169} \\
      & $N=4$  & 3.27 & \ms{26.38}{5.63} & \ms{0.778}{0.145} & \ms{0.206}{0.129} \\
      & $N=8$  & 5.91 & \ms{27.00}{5.52} & \ms{0.793}{0.151} & \ms{0.193}{0.125} \\
      & $N=16$ & 11.27 & \bestms{27.24}{4.85} & \bestms{0.798}{0.131} & \bestms{0.189}{0.110} \\

    \bottomrule
  \end{tabular}
\end{table}

\paragraph{Pruning fraction.}
Table~\ref{tab:rho-ablation} varies the pruning factor \(\rho\), which controls how long A-TGD maintains multiple particles before pruning to a single trajectory. For inpainting, quality is relatively insensitive to \(\rho\), and aggressive early pruning already gives strong performance. For phase retrieval, the pruning factor has a much larger effect: later pruning substantially improves PSNR, SSIM, and LPIPS, at the cost of increased runtime.

\begin{table}[t]
  \caption{Pruning-fraction ablation for A-TGD on FFHQ.}
  \label{tab:rho-ablation}
  \centering
  \small
  \begin{tabular}{llcccc}
    \toprule
    Task & Pruning factor $\rho$ & Time/img (s) $\downarrow$ & PSNR $\uparrow$ & SSIM $\uparrow$ & LPIPS $\downarrow$ \\
    \midrule

    \multirow{5}{*}{Inpainting}
      & $0.10$ & 1.76 & \ms{24.88}{2.32} & \ms{0.848}{0.025} & \bestms{0.123}{0.028} \\
      & $0.25$ & 2.30 & \ms{24.65}{2.57} & \ms{0.847}{0.025} & \ms{0.125}{0.028} \\
      & $0.50$ & 3.26 & \bestms{25.01}{2.34} & \bestms{0.849}{0.025} & \ms{0.124}{0.028} \\
      & $0.75$ & 4.22 & \ms{24.64}{2.44} & \ms{0.845}{0.025} & \ms{0.126}{0.027} \\
      & $1.00$ & 5.12 & \ms{24.93}{2.41} & \ms{0.849}{0.024} & \ms{0.124}{0.029} \\

    \midrule

    \multirow{5}{*}{Phase Retrieval}
      & $0.10$ & 1.80 & \ms{23.25}{7.88} & \ms{0.688}{0.225} & \ms{0.292}{0.203} \\
      & $0.25$ & 2.33 & \ms{23.47}{7.82} & \ms{0.697}{0.219} & \ms{0.282}{0.194} \\
      & $0.50$ & 3.28 & \ms{26.38}{5.63} & \ms{0.778}{0.145} & \ms{0.206}{0.129} \\
      & $0.75$ & 4.23 & \ms{27.01}{5.13} & \ms{0.795}{0.145} & \ms{0.193}{0.124} \\
      & $1.00$ & 5.13 & \bestms{27.19}{5.13} & \bestms{0.802}{0.131} & \bestms{0.188}{0.111} \\

    \bottomrule
  \end{tabular}
\end{table}

\paragraph{Annealing and resampling.}
Table~\ref{tab:anneal-resample-ablation} studies the effect of annealing and resampling. For this ablation, we set \(\rho=1.0\) to isolate the effect of annealing and resampling at matched compute across variants. “No annealing” sets \(\lambda_r=1\) for all stages, applying the full likelihood at every stage, while “no resampling” uses the annealed schedule but disables resampling. The results show that the preferred SMC configuration differs across tasks. For inpainting, annealing with resampling gives the best reconstruction quality across PSNR, SSIM, and LPIPS, although the margin over the ablated variants is modest. For phase retrieval, however, the no-annealing variant performs best across all metrics. We therefore use annealing with resampling as the default for FFHQ inpainting and no annealing as the default for FFHQ phase retrieval.

\begin{table}[t]
  \caption{Ablation of annealing and resampling.}
  \label{tab:anneal-resample-ablation}
  \centering
  \small
  \begin{tabular}{llcccc}
    \toprule
    Task & Variant & Time/img (s) $\downarrow$ & PSNR $\uparrow$ & SSIM $\uparrow$ & LPIPS $\downarrow$ \\
    \midrule

    \multirow{3}{*}{Inpainting}
      & No annealing & 5.05 & \ms{24.36}{2.64} & \ms{0.843}{0.027} & \ms{0.130}{0.031} \\
      & No resampling & 5.06 & \ms{24.42}{2.59} & \ms{0.845}{0.027} & \ms{0.127}{0.031} \\
      & Annealing + resampling & 5.05 & \bestms{24.93}{2.41} & \bestms{0.849}{0.024} & \bestms{0.124}{0.029} \\

    \midrule

    \multirow{3}{*}{Phase Retrieval}
      & No annealing & 5.05 & \bestms{27.19}{5.13} & \bestms{0.802}{0.131} & \bestms{0.188}{0.111} \\
      & No resampling & 5.06 & \ms{26.98}{5.23} & \ms{0.796}{0.135} & \ms{0.191}{0.113} \\
      & Annealing + resampling & 5.07 & \ms{24.80}{7.79} & \ms{0.727}{0.217} & \ms{0.254}{0.190} \\

    \bottomrule
  \end{tabular}
\end{table}

\paragraph{Reconstruction module.}
Table~\ref{tab:reconstruction-module-ablation} compares different reconstruction modules inside
the same A-TGD outer procedure. We use no annealing in this ablation. We use MPGD as the default reconstruction module because it provides the best wall-clock efficiency while remaining competitive in reconstruction quality. The best-quality module, however, depends on the task. For inpainting, DPS gives the strongest PSNR, SSIM, and LPIPS, but is substantially slower. For phase retrieval, DAPS gives the best reconstruction quality, while MPGD is faster and remains competitive.

\begin{table}[t]
  \caption{Reconstruction-module ablation inside the A-TGD outer loop.}
  \label{tab:reconstruction-module-ablation}
  \centering
  \small
  \begin{tabular}{llcccc}
    \toprule
    Task & Reconstruction module & Time/img (s) $\downarrow$ & PSNR $\uparrow$ & SSIM $\uparrow$ & LPIPS $\downarrow$ \\
    \midrule

    \multirow{3}{*}{Inpainting}
      & DPS  & 6.75 & \bestms{25.66}{2.42} & \bestms{0.873}{0.023} & \bestms{0.112}{0.024} \\
      & MPGD & \textbf{3.23} & \ms{24.86}{2.35} & \ms{0.847}{0.025} & \ms{0.126}{0.028} \\
      & DAPS & 3.41 & \ms{24.60}{2.00} & \ms{0.791}{0.027} & \ms{0.156}{0.034} \\

    \midrule

    \multirow{3}{*}{Phase Retrieval}
      & DPS  & 6.77 & \ms{26.38}{5.87} & \ms{0.782}{0.151} & \ms{0.217}{0.129} \\
      & MPGD & \textbf{3.27} & \ms{26.38}{5.63} & \ms{0.778}{0.145} & \ms{0.206}{0.129} \\
      & DAPS & 4.27 & \bestms{29.43}{4.07} & \bestms{0.811}{0.104} & \bestms{0.172}{0.106} \\

    \bottomrule
  \end{tabular}
\end{table}

The ablations show that A-TGD is most useful when early trajectories differ substantially in quality. Inpainting is relatively stable and benefits little from larger particle populations or later pruning, whereas phase retrieval benefits strongly from both. The reconstruction-module ablation confirms that the outer A-TGD procedure is modular: MPGD provides the fastest default, while DPS or DAPS can improve quality for specific tasks at higher cost. The annealing ablation shows that tempering is not uniformly beneficial: it helps modestly for inpainting but hurts phase retrieval in this setting.

\begin{table}[t]
  \caption{Annealing-schedule ablation for A-TGD on FFHQ without resampling.}
  \label{tab:annealing-ablation-no-resampling}
  \centering
  \small
  \begin{tabular}{llcccc}
    \toprule
    Task & Annealing schedule & Time/img (s) $\downarrow$ & PSNR $\uparrow$ & SSIM $\uparrow$ & LPIPS $\downarrow$ \\
    \midrule

    \multirow{6}{*}{Inpainting}
      & No annealing & 3.29 & \ms{24.86}{2.35} & \ms{0.847}{0.025} & \ms{0.126}{0.028} \\
      & Uniform annealing & 3.27 & \ms{24.85}{2.34} & \bestms{0.848}{0.025} & \bestms{0.124}{0.028} \\
      & Noise-dependent, $\alpha=0.5$ & 3.27 & \bestms{24.87}{2.27} & \ms{0.847}{0.024} & \ms{0.126}{0.028} \\
      & Noise-dependent, $\alpha=1$ & 3.27 & \ms{24.85}{2.31} & \ms{0.847}{0.024} & \ms{0.126}{0.028} \\
      & Noise-dependent, $\alpha=2$ & 3.27 & \ms{24.86}{2.25} & \ms{0.847}{0.023} & \ms{0.126}{0.027} \\
      & Noise-dependent, $\alpha=4$ & 3.27 & \ms{24.82}{2.51} & \ms{0.847}{0.025} & \ms{0.126}{0.029} \\

    \midrule

    \multirow{6}{*}{Phase Retrieval}
      & No annealing & 3.25 & \bestms{26.38}{5.63} & \bestms{0.778}{0.145} & \bestms{0.206}{0.129} \\
      & Uniform annealing & 3.24 & \ms{22.92}{6.45} & \ms{0.689}{0.189} & \ms{0.284}{0.163} \\
      & Noise-dependent, $\alpha=0.5$ & 3.24 & \ms{26.12}{5.80} & \ms{0.770}{0.153} & \ms{0.212}{0.133} \\
      & Noise-dependent, $\alpha=1$ & 3.23 & \ms{26.01}{5.77} & \ms{0.770}{0.150} & \ms{0.211}{0.132} \\
      & Noise-dependent, $\alpha=2$ & 3.24 & \ms{25.33}{6.47} & \ms{0.750}{0.173} & \ms{0.230}{0.155} \\
      & Noise-dependent, $\alpha=4$ & 3.23 & \ms{25.03}{6.57} & \ms{0.747}{0.179} & \ms{0.236}{0.161} \\

    \bottomrule
  \end{tabular}
\end{table}

\paragraph{Annealing schedule without resampling.}
Table~\ref{tab:annealing-ablation-no-resampling} compares different annealing schedules for A-TGD without resampling. For inpainting, the choice of annealing schedule has only a small effect on reconstruction quality: uniform annealing gives the best SSIM and LPIPS, while the noise-dependent schedule with $\alpha=0.5$ gives the best PSNR. In contrast, for phase retrieval, annealing does not improve performance. The no-annealing variant achieves the best PSNR, SSIM, and LPIPS, while stronger noise-dependent annealing progressively degrades reconstruction quality.

\begin{table}[t]
  \caption{Annealing-schedule ablation for A-TGD on FFHQ with always resampling.}
  \label{tab:annealing-ablation-always-resampling}
  \centering
  \small
  \begin{tabular}{llcccc}
    \toprule
    Task & Annealing schedule & Time/img (s) $\downarrow$ & PSNR $\uparrow$ & SSIM $\uparrow$ & LPIPS $\downarrow$ \\
    \midrule

    \multirow{5}{*}{Inpainting}
      & Uniform annealing & 3.19 & \ms{25.01}{2.34} & \bestms{0.849}{0.025} & \bestms{0.124}{0.028} \\
      & Noise-dependent, $\alpha=0.5$ & 3.19 & \ms{24.94}{2.21} & \ms{0.847}{0.024} & \ms{0.126}{0.028} \\
      & Noise-dependent, $\alpha=1$ & 3.19 & \ms{24.90}{2.37} & \ms{0.848}{0.024} & \ms{0.125}{0.028} \\
      & Noise-dependent, $\alpha=2$ & 3.19 & \ms{24.89}{2.38} & \ms{0.847}{0.024} & \ms{0.126}{0.029} \\
      & Noise-dependent, $\alpha=4$ & 3.18 & \bestms{25.05}{2.26} & \ms{0.848}{0.024} & \ms{0.125}{0.028} \\

    \midrule

    \multirow{5}{*}{Phase Retrieval}
      & Uniform annealing & 3.25 & \ms{21.66}{7.27} & \ms{0.654}{0.210} & \ms{0.318}{0.193} \\
      & Noise-dependent, $\alpha=0.5$ & 3.25 & \bestms{24.77}{7.40} & \bestms{0.735}{0.198} & \bestms{0.248}{0.175} \\
      & Noise-dependent, $\alpha=1$ & 3.24 & \ms{23.92}{7.79} & \ms{0.712}{0.208} & \ms{0.269}{0.186} \\
      & Noise-dependent, $\alpha=2$ & 3.24 & \ms{24.18}{7.52} & \ms{0.719}{0.204} & \ms{0.267}{0.187} \\
      & Noise-dependent, $\alpha=4$ & 3.24 & \ms{22.25}{7.86} & \ms{0.668}{0.220} & \ms{0.309}{0.206} \\

    \bottomrule
  \end{tabular}
\end{table}

\paragraph{Annealing schedule with always resampling.}
Table~\ref{tab:annealing-ablation-always-resampling} reports the same annealing-schedule ablation when resampling is performed at every annealing step. Compared to the no-resampling setting in Table~\ref{tab:annealing-ablation-no-resampling}, always resampling gives small improvements for inpainting. For phase retrieval, always resampling substantially degrades reconstruction quality across all annealing schedules: even the best resampling variant remains worse than the corresponding no-resampling result, and uniform annealing is particularly unstable. These results suggest that frequent resampling can remove useful particle diversity in harder inverse problems.

\FloatBarrier



\section{Broader Impacts}
\label{app:broader}

TGD and A-TGD are intended as methodological tools for training-free conditional diffusion and inverse-problem reconstruction. Potential positive impacts include more efficient use of pretrained diffusion priors for scientific imaging, restoration, and other inverse problems where paired task-specific training data are unavailable or costly. A-TGD may also reduce computational cost relative to independent multi-trajectory baselines by using multiple particles early and pruning to a single trajectory, improving speed-quality tradeoffs under fixed wall-clock budgets.

The same capabilities also have potential negative impacts. Improvements in image reconstruction and inpainting could be misused for image manipulation, misleading visual content generation, or reconstruction of sensitive visual information from incomplete measurements. These risks are partially mitigated by the scope of this work: we do not release a new pretrained generative model or dataset, and our experiments are limited to standard benchmark datasets and inverse-problem settings. Nevertheless, downstream deployments should consider data provenance, consent, privacy, and misuse monitoring, especially in applications involving people or sensitive imagery.

\section{Further Image Examples}
\label{app:further-image-examples}

We provide additional qualitative examples for each image inverse-problem setting. For each task and
dataset, we visualize examples with the largest absolute LPIPS difference between A-TGD and
DAPS \((N=4)\). These examples are intended to highlight cases where the methods differ most in
perceptual reconstruction quality and are taken from the same evaluation runs used for the matched-runtime results in
Table~\ref{tab:image-results}.

\begin{figure}[H]
    \centering
    \includegraphics[width=\linewidth]{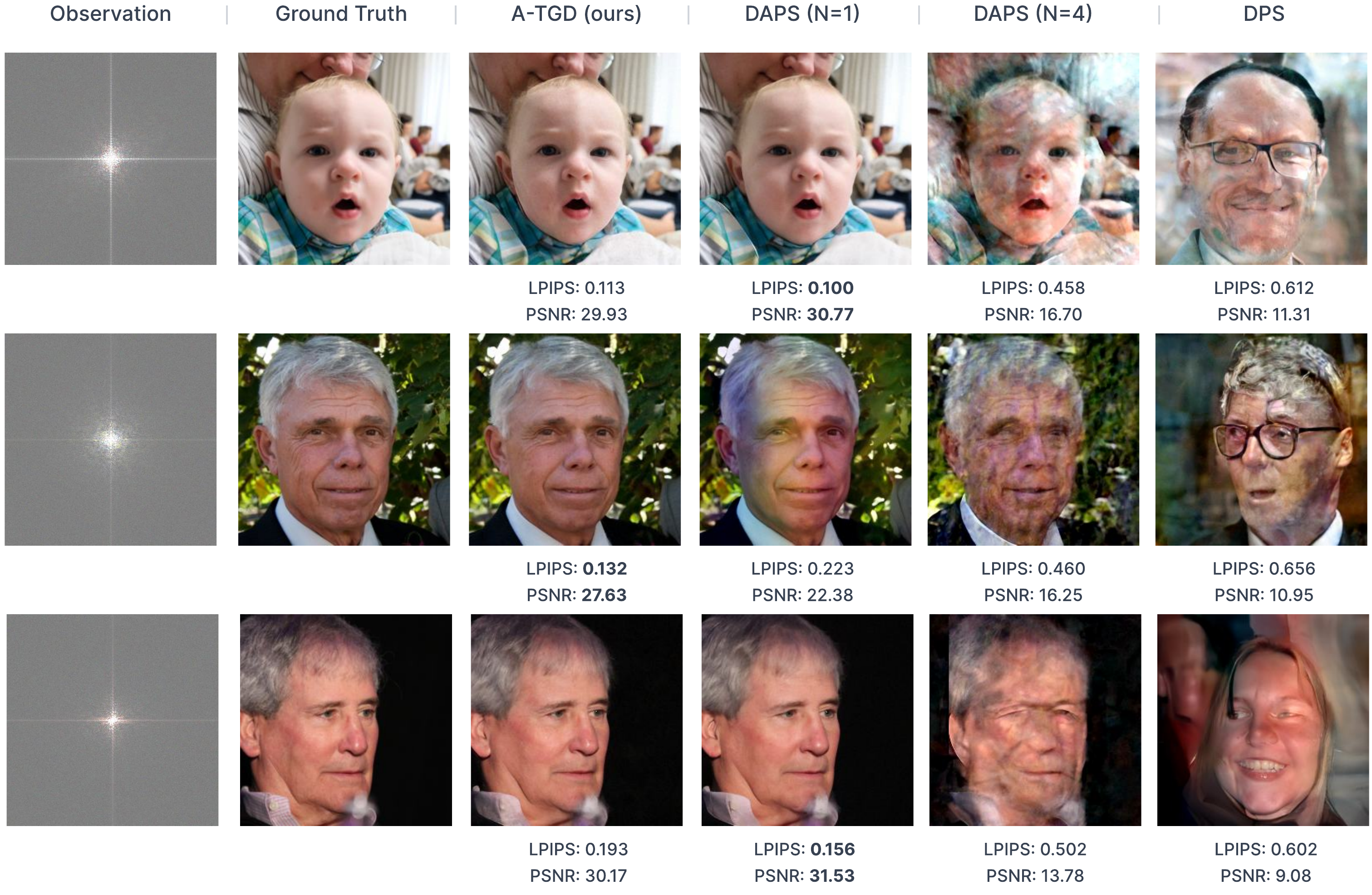}
    \caption{
    FFHQ phase retrieval examples.
    }
    \label{fig:examples-ffhq-pr}
\end{figure}

\begin{figure}[t]
    \centering
    \includegraphics[width=\linewidth]{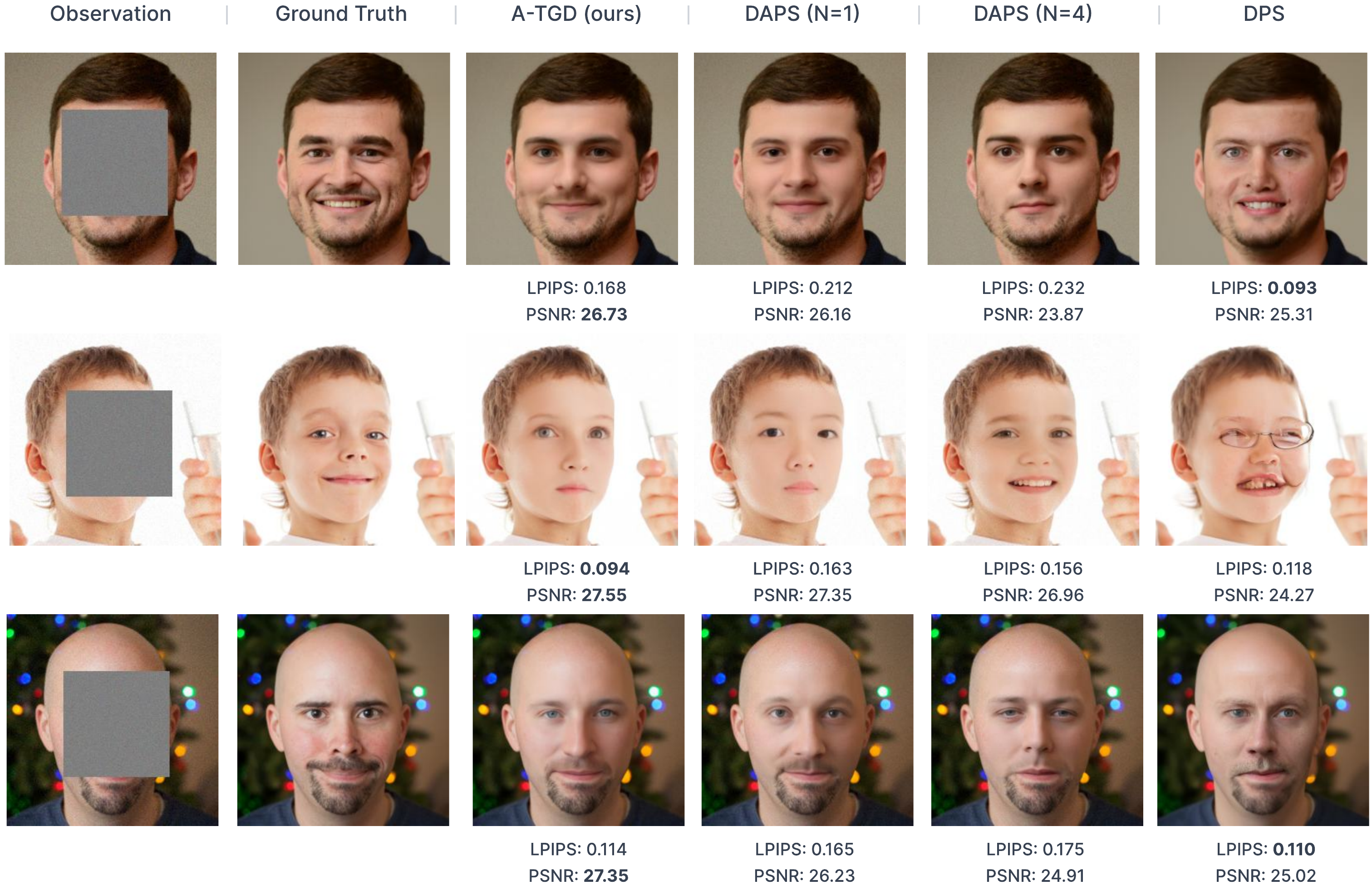}
    \caption{
    FFHQ inpainting examples.
    }
    \label{fig:examples-ffhq-inpainting}
\end{figure}

\begin{figure}[t]
    \centering
    \includegraphics[width=\linewidth]{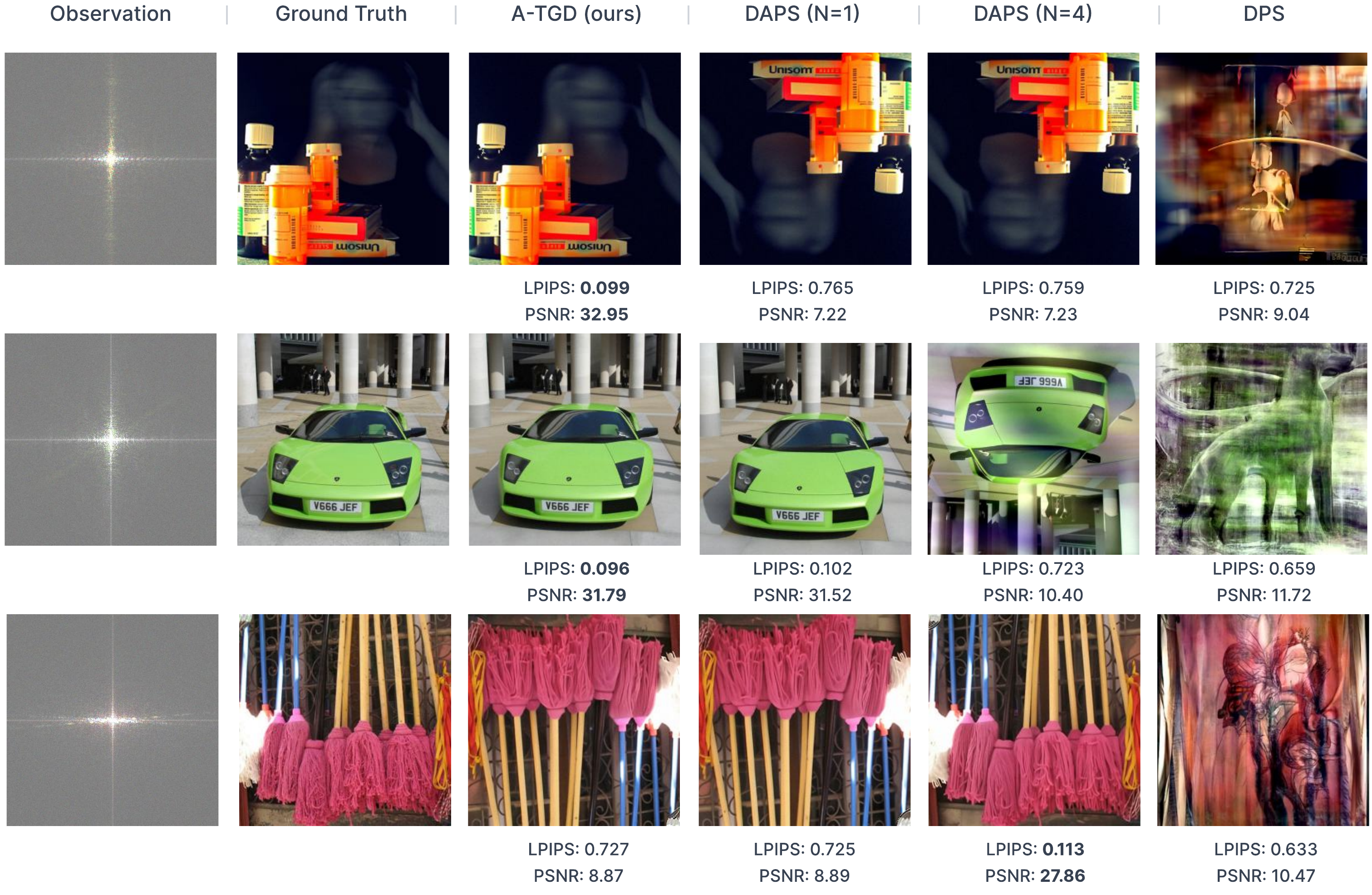}
    \caption{
    ImageNet phase retrieval examples.
    }
    \label{fig:examples-imagenet-pr}
\end{figure}

\begin{figure}[t]
    \centering
    \includegraphics[width=\linewidth]{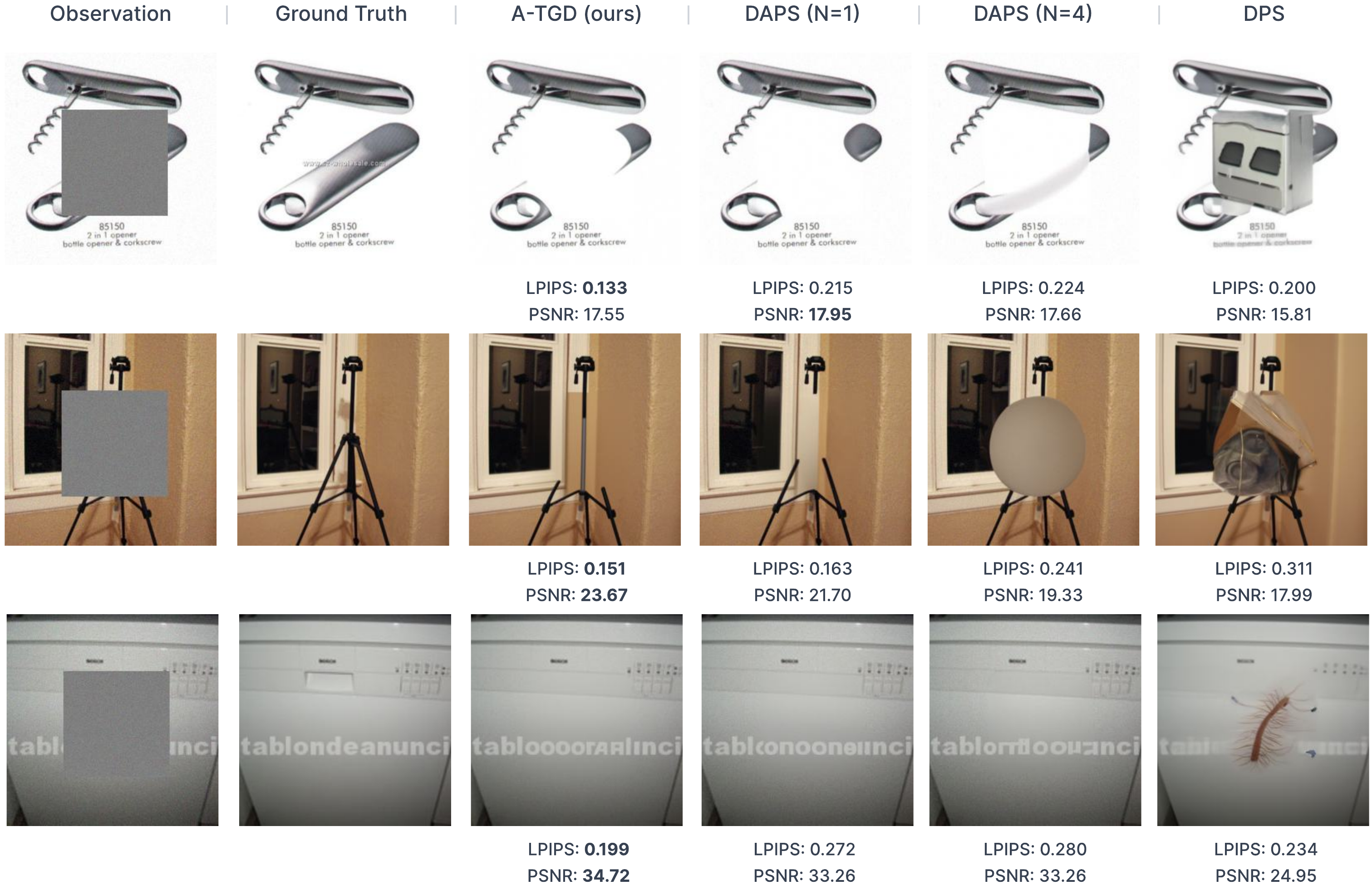}
    \caption{
    ImageNet inpainting examples.
    }
    \label{fig:examples-imagenet-inpainting}
\end{figure}


\newpage
\FloatBarrier

\end{document}